\documentclass[nonacm,sigconf]{acmart}
\usepackage{popets}

\usepackage{array}
\usepackage{pifont}
\usepackage{amsmath}
\usepackage[ruled,linesnumbered]{algorithm2e}
\usepackage{subcaption}
\usepackage{graphicx}
\usepackage[dvipsnames]{xcolor}
\usepackage{multirow}
\usepackage{enumitem}
\usepackage{wasysym}
\setcopyright{popets}
\copyrightyear{}

\acmYear{}
\acmVolume{}
\acmNumber{}
\acmDOI{}
\acmISBN{}
\acmConference{}
\begin{document}

\title[Pushing the (Decision) Boundaries]{Pushing the (Decision) Boundaries: Dynamically Calibrating Differentially Private Noise to Explainability in Federated Learning}

\author{Michael Khavkin}
\orcid{}
\affiliation{%
  \institution{School of Industrial \& Intelligent Systems Engineering, Tel Aviv University}
  \city{}
  \state{}
  \country{}}
\email{}

\author{Kichang Lee}
\affiliation{%
  \institution{School of Integrated Technology, Yonsei University}
  \city{}
  \country{}}
\email{}

\author{Jaeho Jin}
\affiliation{%
  \institution{School of Integrated Technology, Yonsei University}
  \city{}
  \country{}
}
\email{}

\author{JeongGil Ko}
\affiliation{%
 \institution{School of Integrated Technology, Yonsei University}
 \city{}
 \state{}
 \country{}}
\email{}

\author{Eran Toch}
\affiliation{%
 \institution{School of Industrial \& Intelligent Systems Engineering, Tel Aviv University}
 \city{}
 \state{}
 \country{}}
\email{}

\renewcommand{\shortauthors}{}

\begin{abstract}

Federated Learning (FL) with Differential Privacy (DP) is increasingly adopted to preserve data confidentiality in distributed machine learning. However, DP noise distorts learned representations and degrades explanation fidelity, limiting differentially private FL where trustworthy explanations are required, such as assistive clinical diagnosis. Prior work adapted DP noise with static feature-importance signals, restricting explainability to post hoc analysis and precluding noise calibration to explanation quality during training. We propose XCal-FL, a closed-loop, explainability-driven local training algorithm for image classification in cross-silo FL that dynamically calibrates DP noise from three complementary signals: (1) prediction logit variations, measuring causal influence on model confidence, (2) counterfactual margins, capturing decision-boundary sensitivity, and (3) saliency concentration, quantifying spatial coherence of model attention, while enforcing formal DP guarantees via adaptive privacy accounting. Experiments on three medical imaging datasets across varying FL configurations show that XCal-FL yields more accurate and interpretable global models, improving predictive performance by over 10\% and explanation fidelity by up to 5$\times$ over static-noise FL, and outperforming state-of-the-art adaptive DP methods in fidelity. XCal-FL also achieves higher privacy-budget efficiency, turning each unit of cumulative privacy loss into larger gains in both accuracy and explanation fidelity. Our analysis further reveals that, unlike predictive performance, which scales roughly linearly with privacy loss, explanation fidelity exhibits non-linear dynamics. These findings suggest explainability is a distinct dimension of the privacy trade-off that cannot be inferred from utility alone, with implications for training and privacy-budget allocation in decision-critical applications.

\end{abstract}

\keywords{Federated Learning, Explainable AI, Differential Privacy, Privacy-Utility Tradeoff, Explanation Fidelity, Medical Imaging}

\maketitle

\section{Introduction}
\label{sec:intro}

Machine learning (ML) models are increasingly deployed in domains that require learning from distributed and sensitive data, motivating the adoption of privacy-preserving training paradigms, most notably Federated Learning (FL) \cite{mcmahan2017communication}.
FL enables collaborative model training across multiple parties without centralizing raw data, helping organizations comply with privacy regulations such as GDPR \citep{GDPR2016} and HIPAA \cite{HIPAA1996}.
Particularly, in healthcare applications, ML models for processing medical images provide rich diagnostic information that supports early disease detection and personalized treatment planning, while reducing the burden on clinicians through automated analysis \cite{anikwe2022mobile,li2023privacy}.

Despite its privacy advantages, FL alone does not fully prevent information leakage, as shared model updates can still be exploited through inference and reconstruction attacks \cite{zhu2019deep,geiping2020inverting}.
To address this risk, Differential Privacy (DP) \citep{dwork2006calibrating} has emerged as the gold standard for privacy protection in FL \citep{xu2024privatetraininglanguage}, typically implemented via gradient perturbation mechanisms such as DP-SGD \cite{abadi2016deeplearningwithdp}.
DP provides formal guarantees against information leakage by bounding the sensitive information that can be inferred about the input data or its subjects from model outputs and measured by the privacy loss budget $\varepsilon$.
However, injecting DP noise during FL training introduces trade-offs that extend beyond the privacy-utility trade-off \citep{alvim2012differential}.
In particular, stochastic noise can distort learned representations, weakening the relationship between input features and model predictions and degrading the fidelity of model explanations, which are critical for understanding \textit{why} a model produced a given output.
Moreover, while predictive performance degrades approximately monotonically with increasing noise, the relationship between DP and explanation fidelity is less understood, suggesting that explainability may require dedicated treatment beyond standard privacy-utility analysis.
This is because gradient-based attributions depend on the fine-grained spatial coherence of the learned gradient field, a quantity that is more sensitive to the directional corruption introduced by DP-SGD \cite{duan2025analyzing} than the coarse class-level output on which predictive accuracy relies \cite{woerl2023initialization}.
This creates a tension between privacy and explainability, limiting the applicability of differentially private FL in settings where transparent and trustworthy explanations are required.
For example, in multi-hospital collaborations for radiological decision-support systems, a model that achieves high sensitivity but provides unreliable explanations may fail institutional review requirements or undermine clinician confidence in automated findings, limiting real-world adoption regardless of predictive accuracy.

Current differentially private FL methods apply uniform noise to all gradient features.
This leads to two complementary problems: important features that carry discriminative signal are over-perturbed, degrading both predictive accuracy and explanation fidelity, while less informative features receive more noise than necessary, wasting privacy budget without meaningful protection gain.
Alternative approaches adjust noise using feature-importance signals~\citep{li2023balancing}, but rely on a fixed noise magnitude that does not adapt to training dynamics.
Moreover, recent empirical studies provide growing evidence of a privacy-explainability trade-off under DP \cite{saifullah2024privacy}.
In both cases, noise is applied in an open-loop fashion: it does not condition on whether the model's current attributions are actually faithful to its predictions.
As a result, noise calibration cannot respond to shifts in explanation quality during training, and the joint trade-offs among privacy, predictive performance, and explainability remain insufficiently explored in FL.
This gap is further underscored by recent surveys highlighting the lack of evaluation methodologies for explainability under DP~\citep{lopez2024interplay}.
This leads to an important question: \textbf{(RQ1) Can explainability-aware noise calibration during FL training improve both predictive performance and explanation fidelity while preserving formal DP guarantees?}

Beyond the established privacy-utility paradigm, recent work demonstrates that multiple model objectives interact non-trivially under DP \cite{hartmann2022privacy}.
For example, techniques designed to improve predictive performance under strict privacy budgets can inadvertently amplify model bias, revealing a complex tension between privacy, utility, and fairness \cite{hassanpour2025impact}.
Yet, while static evaluations confirm that DP noise inherently degrades explanation fidelity \cite{saifullah2024privacy,ezzeddine2024differential}, the \textit{dynamics} of this degradation and how explanation quality evolves as training progresses remain unexplored.
This leads to two additional open questions.
First, \textbf{(RQ2) can adaptive DP noise calibration translate cumulative privacy loss into larger gains in accuracy and explanation fidelity than static noise allocation?}
Second, if so, \textbf{(RQ3) does explainability follow the same trajectory as predictive performance under cumulative privacy loss, or does it constitute a fundamentally different dimension of the privacy-utility trade-off?}
Beyond improving accuracy and explanation fidelity as isolated metrics, understanding how explanation quality evolves during training has a direct operational consequence: if explainability peaks earlier than predictive performance under cumulative privacy loss, practitioners can terminate training at the point of maximum explanation gain rather than waiting for performance to plateau, thereby utilizing less cumulative privacy loss budget.
This makes the dynamics of explainability under DP a practically actionable quantity, rather than being merely a post-hoc diagnostic.

In this work, we address the aforementioned questions by integrating privacy-aware explainability into the local training process of clients in FL.
Following prior work that improves FL by adapting training dynamics \citep{jin2022accelerated,li2023revisiting}, we propose \textit{XCal-FL}, an explainability-driven FL framework for cross-silo FL environments that dynamically calibrates the magnitude of DP noise during local training using a composite signal.
Specifically, our approach dynamically regulates noise injection during training by monitoring three signals: (1) class logit variations induced by salient-region masking (capturing the causal influence of highlighted regions on the prediction); (2) counterfactual margins (capturing decision-boundary sensitivity to the removal of explanatory evidence); and (3) influential region concentration (capturing the spatial coherence of the model's attention).
By integrating explainability and decision-related feedback directly into training in a closed-loop manner, \textit{XCal-FL} reduces unnecessary perturbation of informative model representations while fully preserving DP guarantees, as the cumulative privacy cost of the adaptively scaled noise is tracked at every training step to enforce the target ($\varepsilon$, $\delta$)-DP budget.

We evaluated our approach on three medical imaging tasks, covering blood cell type classification, pneumonia detection from X-ray scans and melanoma detection from dermatologic scans.
Experiments span varying cross-silo FL configurations with different numbers of clients ($N = 3, 10, 50$) and data heterogeneity conditions, measuring both predictive performance and explanation fidelity using the Remove and Debias (ROAD) metric \cite{rong2022consistent}.
Across datasets and privacy budgets, \textit{XCal-FL} consistently improved both predictive performance and explanation fidelity over static DP-FL and the state-of-the-art adaptive method AGP \citep{li2023balancing}, while achieving higher privacy--utility and privacy--explainability efficiency, i.e., translating cumulative privacy loss into larger gains in accuracy and explanation fidelity.
Importantly, our empirical analysis reveals an asymmetry: while predictive performance scales approximately linearly with cumulative privacy expenditure, explanation fidelity exhibits non-linear dynamics.
This suggests that standard privacy-utility analyses are insufficient to capture the full impact of DP on model behavior, and that explainability requires dedicated evaluation and optimization as a distinct dimension of the privacy-utility trade-off.

Our main contributions are summarized as follows:
\begin{itemize}

\item \textbf{Closed-loop noise calibration:} We proposed \textit{XCal-FL}, a cross-silo FL training approach that dynamically calibrates DP noise magnitude based on explainability- and decision-related signals, while preserving formal $(\varepsilon, \delta)$-DP guarantees through adaptive privacy accounting.

\item \textbf{Empirical evaluation on medical imaging:}
Across three medical image classification datasets and varying FL configurations, \textit{XCal-FL} improves F1-score by over 10\% and ROAD explanation fidelity by up to $5\times$ over static DP-FL, while outperforming AGP \cite{li2023balancing} by over 10\% in F1-score and over 75\% in explanation fidelity, on average.

\item \textbf{Privacy-budget efficiency analysis:} We proposed a quantitative evaluation of privacy-budget efficiency, defined as the gain in F1-score or ROAD per unit of cumulative privacy loss.
\textit{XCal-FL} improves privacy-utility efficiency by up to 25\% and privacy-explainability efficiency by threefold over training with static DP noise.

\item \textbf{Privacy-explainability dynamics:} Our empirical analysis reveals that, while predictive performance scales approximately linearly with cumulative privacy loss, explanation fidelity exhibits non-linear dynamics, with ROAD scores potentially increasing early in training before the privacy budget is fully consumed.

\end{itemize}

Taken together, these contributions advance the practical viability of differentially private FL for safety-critical clinical decision-support applications, where regulatory and institutional requirements increasingly demand that automated diagnostic outputs be accompanied by faithful and interpretable justifications.

\section{Related Work}

Recent work demonstrates that DP interacts non-trivially with model properties beyond utility: DP-driven generalization can amplify bias~\cite{hassanpour2025impact}, post hoc privacy analysis can reclaim budget without sacrificing formal guarantees~\cite{hartmann2022privacy}, and alternative federated optimization formulations leverage synthetic samples alongside gradients to improve privacy-utility trade-offs~\cite{wang2024fedlapdp}.
However, DP noise systematically degrades prediction performance and explanation fidelity~\cite{ezzeddine2024differential,saifullah2024privacy}, with gradient-based attributions~\citep{selvaraju2017grad} proving particularly sensitive to perturbations~\citep{zaher2024manifold}.
Prior DP-FL applications in healthcare rely on post hoc explainability~\citep{briola2024federated,adnan2022federated,jiang2023client} or static noise schedules~\citep{daniel2025flemXAI}.
Moreover, recent surveys highlight the lack of consistent evaluation frameworks for explainability under DP-FL \citep{lopez2024interplay}.
These findings motivate training-time approaches that explicitly account for explainability, rather than relying on post hoc analysis.

\newcommand{\symA}{\scalebox{0.65}{$\CIRCLE$}}
\newcommand{\symP}{\scalebox{0.65}{$\LEFTcircle$}}
\newcommand{\symF}{\scalebox{0.65}{$\Circle$}}
\newcommand{\qc}[1]{\makebox[1.5em][c]{#1}}
\newcommand{\adapted}{\qc{\symA}}
\newcommand{\placed}{\qc{\symP}}
\newcommand{\fixedq}{\qc{\symF}}

\begin{table*}[t]
\centering
\small
\caption{Comparison between adaptive DP-FL methods. \textbf{Adapted quantity}: clipping bound $C$, noise magnitude $\sigma$, budget allocation $\varepsilon$ (\symA{} adapted, \symP{} placement only, \symF{} fixed). \textbf{Feedback}: \emph{closed} = conditioned on a measured model response; \emph{open} = fixed schedule or one-shot read of model state. $^\dagger$Signal privatized and charged to the budget.}
\label{tab:relwork}
\resizebox{\textwidth}{!}{%
\begin{tabular}{@{} l ccc l l cccc @{}}
\toprule
& \multicolumn{3}{c}{\textbf{Adapted Quantity}} & & & \multicolumn{3}{c}{\textbf{Adapted At}} & \\
\cmidrule(lr){2-4} \cmidrule(lr){7-9}
\textbf{Method} & $C$ & $\sigma$ & $\varepsilon$ & \textbf{Adaptation Signal} & \textbf{Feedback} & \textbf{Side} & \textbf{Freq.} & \textbf{DP Granularity} & \textbf{Expl.-Aware} \\
\midrule
Adaptive clipping~\cite{andrew2021differentially} & \adapted & \fixedq & \fixedq & Private quantile of update norms$^\dagger$ & Closed & Server & Round & User-level & No \\
Adap DP-FL~\cite{fu2022adap} & \adapted & \adapted & \fixedq & Grad-norm heterogeneity + round index & Open & Client & Round & Record-level & No \\
Budget alloc.~\cite{hartmann2022} & \fixedq & \fixedq & \adapted & Mechanism output & Post hoc & --- & --- & --- & No \\
AGP~\cite{li2023balancing} & \fixedq & \placed & \fixedq & Grad-CAM channel importance & Open & Client & Round & Record-level & Yes \\
\textbf{XCal-FL (ours)} & \fixedq & \adapted & \fixedq & Composite Grad-CAM signal$^\dagger$ & Closed & Client & Batch & Record-level & \textbf{Yes} \\
\bottomrule
\end{tabular}%
}
\end{table*}

As summarized in Table~\ref{tab:relwork}, existing adaptive DP-FL methods adjust clipping bounds, noise scales, or budget allocations, but typically do not integrate explainability during training.
For instance, standard adaptive clipping methods~\cite{andrew2021differentially} dynamically tune the clipping threshold $C$ using privatized quantiles of client update norm distributions.
However, they focus strictly on update-scale optimization and remain entirely agnostic to model explainability.
Among explainability-aware approaches, the most closely related is Adaptive Gradient Protection (AGP)~\cite{li2023balancing}, an open-loop method that uses Grad-CAM channel importance weights to decide \emph{where} noise lands spatially via binary channel masking while keeping per-round noise magnitude fixed.
In contrast, XCal-FL is a closed-loop method that dynamically regulates \emph{how much} noise (via the noise multiplier $\sigma$) to inject at each step based on a composite signal that (i) causally tests whether the saliency map is decision-relevant via logit drop under masking, (ii) measures decision-boundary geometry via counterfactual margin, and (iii) tracks spatial attention coherence.
Thus, because AGP controls spatial noise distribution while XCal-FL dynamically scales temporal noise magnitude, the two strategies operate on orthogonal axes and are complementary.
Furthermore, XCal-FL's closed-loop formulation enables analyzing how cumulative privacy budget translates into joint gains in accuracy and explanation fidelity, a dimension underexplored in prior work.

\section{Background}
\label{sec:relwork}

\subsection{Differentially Private Federated Learning}\label{sec:dp_fl_background}

Federated Learning (FL) \cite{mcmahan2017communication} is a distributed learning paradigm in which multiple clients, such as hospitals or banks, collaboratively train a shared model while keeping raw data decentralized.
Formally, at communication round $t$, each participating client $i$ initializes its local model parameters $\mathbf{w}_i$ with the global parameters $\mathbf{w}^{(t)}$.
The client minimizes a local loss function $\mathcal{L}$ using stochastic gradient descent.
For a given mini-batch $(x_k, y_k)_{k=1}^B$, the per-sample gradient is computed with respect to the current local model as:
\begin{equation}
g_k = \nabla_{\mathbf{w}_i} \mathcal{L}(\mathbf{w}
_i; x_k, y_k).
\end{equation}
In standard (non-private) FL, the batch gradient $g$ is simply the average of these per-sample gradients, and the parameters are updated iteratively over $E$ local epochs according to $\mathbf{w}_i \leftarrow \mathbf{w}_i - \eta g$, where $\eta$ denotes the learning rate. After completing its local training steps, the resulting parameters form client $i$'s updated model, denoted as $\mathbf{w}_i^{(t+1)}$. The server then aggregates these models to form the new global model:
\begin{equation}\label{eq:fl_param_update}
\mathbf{w}^{(t+1)} = \sum_{i=1}^{N} \frac{n_i}{\sum_{j=1}^{N} n_j}\mathbf{w}_{i}^{(t+1)},
\end{equation}
where $N$ is the number of participating clients and $n_i$ is the local dataset size of client $i$.

To provide formal privacy guarantees, FL is commonly combined with Differential Privacy (DP), which bounds the information that can be inferred about any single individual from the training process.
A randomized mechanism $\mathcal{M}$ is $(\varepsilon,\delta)$-differentially private if, for any neighboring datasets $D$ and $D'$ differing in one record, and for any output set $\mathcal{S}$,
\begin{equation}\label{eq:dp_definition}\Pr[\mathcal{M}(D) \in \mathcal{S}] \leq e^{\varepsilon} \Pr[\mathcal{M}(D') \in \mathcal{S}] + \delta,
\end{equation}
where $\varepsilon$ controls the privacy loss budget and $\delta$ denotes the probability that this guarantee may not hold.
DP can be enforced at different points in the FL pipeline, often yielding different levels of privacy granularity.
Server-side approaches such as DP-FedAvg~\cite{mcmahan2018learning} provide user-level privacy by clipping individual client updates and adding noise to their aggregation on the server.
While this protects against external observers, it requires trust in the server itself to handle raw client updates securely.
Differentially Private client-side DP-SGD~\cite{abadi2016deeplearningwithdp} instead enforces record-level privacy locally: each client clips the per-sample gradients to a fixed norm $C$ and adds calibrated Gaussian noise during local training.
Specifically, instead of using the raw per-sample gradient $g_k$, the gradient is clipped as:
\begin{equation}
\bar{g}_k = \frac{g_k}{\max\left(1,\;\lVert g_k \rVert_2 / C\right)},
\end{equation}
and the noisy aggregated gradient for the batch is computed as:
\begin{equation}\label{eq:dp_sgd_update}
\tilde{g} = \frac{1}{B}\left(\sum_{k=1}^{B} \bar{g}_k + \mathcal{N}(0, \sigma^2 C^2 \mathbf{I})\right),
\end{equation}
The local parameter update step is then performed using this private gradient: $\mathbf{w}_i \leftarrow \mathbf{w}_i - \eta \tilde{g}$.
Here, the noise multiplier $\sigma$ is calibrated such that the cumulative privacy loss across all local training steps satisfies a target $(\varepsilon,\delta)$ privacy budget.
Client-side DP-SGD provides a stronger trust model than standard server-side approaches, as the resulting model updates $\mathbf{w}_i^{(t+1)}$ are differentially private before they ever leave the client device, protecting each client's individual records even against a curious server.
A summary of our notations is given in Table~\ref{tab:hyperparams}.

\subsection{Model Explainability}

Grad-CAM \citep{selvaraju2017grad} is a widely used local attribution method that highlights image regions most influential to a model's prediction, with explanation quality commonly assessed by the confidence drop when these regions are masked.
The Remove And Debias (ROAD) framework~\cite{rong2022consistent} extends this idea into a systematic evaluation of explanation fidelity, i.e., how accurately a saliency map reflects the model's true decision process.
The ROAD score serves as a proxy for model explainability because it provides a computationally efficient and consistent evaluation of saliency-based explanations, such as those provided by Grad-CAMs.
ROAD evaluates explanation fidelity by progressively removing either the most (MoRF) or least (LeRF) relevant pixels first, imputing them with local averages, and measuring the resulting change in the model's output.
Hence, larger performance changes indicate higher explanation fidelity, which is an indication for a higher model explainability.

Formally, let \(f(x)\) denote the model's confidence for input image \(x\), and let 
\(x^{(k)}_{s}\) denote the input obtained by perturbing the top \(k\%\) most ($MoRF$ scheme) and least ($LeRF$ scheme) salient pixels.
For each removal level \(k\), consider the absolute confidence difference for either the \text{MoRF} or the \text{LeRF} scheme $s$ as
\begin{equation}
\Delta_{s}(k) = f\!\left(x^{(k)}_{s}\right) - f(x),
\end{equation}
where \(\Delta_{MoRF}(k)\) corresponds to the model's confidence change after removing the most salient features and \(\Delta_{LeRF}(k)\) to the change after removing the least salient ones.
We follow an approach that progressively removes features across multiple thresholds and averages the resulting confidence changes to produce a single combined ROAD score \cite{jacobgilpytorchcam}, as follows:

\begin{equation}
\mathrm{ROAD}
= \frac{1}{2|K|} \sum_{k \in K} \big( \Delta_{\mathrm{LeRF}}(k) - \Delta_{\mathrm{MoRF}}(k) \big),
\end{equation}
where the set $K$ consists of thresholds corresponding to the top 20\%, 40\%, 60\%, or 80\% relevant pixels.

\subsection{Threat Model}
\label{subsec:threat}

\subsubsection{\textbf{Entity Assumptions.}} We consider a cross-silo FL deployment executed over $T$ communication rounds by two types of participants:
\begin{itemize} \item \textbf{Clients} $i\in\{1,\dots,N\}$, i.e., institutions such as hospitals, each holding a private dataset $\mathcal{D}_i$ of medical images and labels in which a single patient may contribute several images.
Clients have sufficient compute for local training and follow the protocol faithfully.
Each client observes only its own data and local model, together with the global models it receives. 
The attribution maps and the explainability signal derived from them (Section~\ref{sec:design}) are computed and consumed locally and are never transmitted.
\item \textbf{Aggregation server}, which broadcasts $\mathbf{w}^{t-1}$, collects the locally trained parameters $\mathbf{w}^{t}_{i}$ and aggregates them (Eq.~\ref{eq:fl_param_update}).
It holds no training data and never observes raw samples, per-sample gradients or attribution maps. 
\end{itemize}

\subsubsection{\textbf{Communication channels.}} Clients communicate exclusively with the server and never with each other, so no client observes another client's individual update.
All client-server links are assumed to be confidential, integrity protected and mutually authenticated (e.g., TLS with client certificates), as is common in cross-silo federations governed by institutional agreements.
An outside observer can therefore neither read nor modify updates in transit, nor impersonate a participant, and its knowledge is limited to the released global model.
We assume no secure aggregation primitive, because our guarantee is enforced locally before any parameter leaves the client.

\subsubsection{\textbf{Adversary model.}} The primary adversary is an honest-but-curious server that follows the protocol but attempts to infer information about individual training records from the messages it legitimately receives.
It is the strongest passive adversary in our setting, since it observes each client's update $\mathbf{w}^{t}_{i}$ before aggregation and accumulates these observations across rounds.
Its concrete capability is gradient-based reconstruction~\cite{geiping2020inverting,zhu2019deep}, which exploits model updates to recover individual training samples such as chest X-ray scans, exposing patient-identifiable diagnostic information.
An honest-but-curious client, or any party obtaining the released model, sees only aggregated parameters and is therefore strictly weaker, so by the post-processing property of DP~\cite{dwork2014algorithmic} the guarantee established against the server covers it as well.

The unit of protection is a single training record, i.e., neighboring datasets $\mathcal{D}_i$ and $\mathcal{D}'_i$ differ in one record, yielding a record-level $(\varepsilon,\delta)$-DP guarantee.
Clients' defense against reconstruction attempts is client-side DP-SGD, which clips per-sample gradients and injects calibrated Gaussian noise before any parameter leaves the client, with cumulative loss tracked by RDP accounting~\cite{wang2019subsampled}.
Because our method varies the noise multiplier $\sigma$ across mini-batches (Section~\ref{sec:design}), two properties bound what the variation itself reveals.
First, the multiplier never falls below the value that standard, non-adaptive DP-SGD would require for the share of the budget allocated to gradient perturbation, for any input including adversarial or out-of-distribution ones, since the signal driving it is bounded by clipping.
Second, the multiplier is a deterministic function of a privatized signal and of public constants, so any inference the server draws from the noise scale is bounded by the budget share we reserve for releasing that signal, keeping the total within the target $(\varepsilon,\delta)$ guarantee (Section~\ref{subsec:noise_calibration_regularizer} and Appendix~\ref{sec:appendix_privacy_guarantee}).

\subsubsection{\textbf{Limitations.}} Under stronger adversaries, an active server that deviates from the protocol, for instance by sending crafted models to isolate a target client, can degrade utility and the correctness of aggregation, but not the record-level DP guarantee, which holds per release for any input model and composes adaptively~\cite{whitehouse2023fully}.
Malicious clients can corrupt the global model through poisoning, for which robust aggregation is a complementary defense, and a client that inflates its explainability score through crafted inputs can at most lower its noise multiplier to the lower end of the permitted range, which leaves the target guarantee intact and does not affect other clients.
Collusion, whether among clients or with the server, adds nothing beyond what the server already observes.
Finally, the privacy guarantee of our method is record-level, so a patient contributing several images is covered only through group privacy, whose $\varepsilon$ degrades linearly in the number of contributed records~\cite{dwork2014algorithmic}.

\section{XCal-FL: Explainability-driven FL Training with Differentially Private Noise Calibration}
\label{sec:design}

\begin{figure*}[ht]
    \begin{center}
    \centerline{\includegraphics[width=0.80\linewidth]{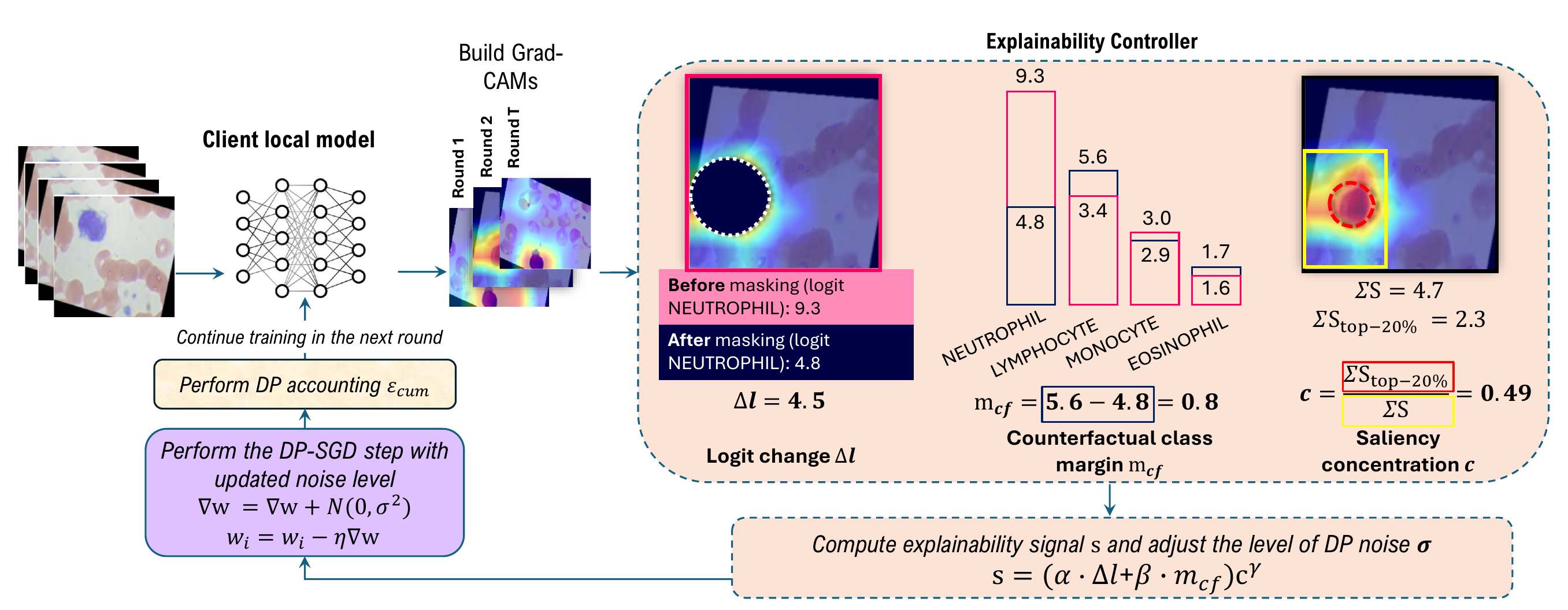}}
    \caption{Overview of FL client's training procedure. The adaptive training process repeats for $T$ iterations for each client.}
    \label{fig:client_training_overview}
    \end{center}
\Description[]{}
\end{figure*}

Current research gaps highlight the need for a FL approach that not only enhances the explainability of model predictions, but also improves how efficiently incurred privacy loss is translated into gains in both performance and explainability.
Hence, we propose a training methodology designed to balance these factors while remaining broadly applicable across diverse imaging tasks.
Our implementation code is available at \url{https://osf.io/xr9nt/overview?view_only=b0fd7b2b88a94578abd06dacb5d19fad}.

\subsection{Main Idea}\label{sec:key_method}

We extend the standard FL training (see server-side Algorithm~\ref{alg:expl_driven_server} in Appendix \ref{sec:appendix_server_side_fedavg_pseudo}) with an explainability-driven client training procedure (Algorithm~\ref{alg:expl_driven_client}) that improves the faithfulness of model explanations while preserving predictive performance and providing DP (Figure~\ref{fig:client_training_overview}).
The key idea is to adapt the amount of DP noise injected into each mini-batch update based on an explainability signal reflected by the model's predictions.
This signal integrates three complementary components, summarized below:

\paragraph{\ding{182} Logit change ($\Delta\ell$)}
    measures the decrease in the predicted-class logit after masking the top-$q$ region of the Grad-CAM heatmap, defined as:
    \begin{equation}
        \Delta \ell = \max\!\big(0,\, \ell_{\hat{y}}(x) - \ell_{\hat{y}}(x^{(q)})\big),
    \end{equation}
    where \(x^{(q)}\) denotes the input masked by the top-$q$ saliency region and \(\hat{y}\) is the highest-scoring predicted class.
    A large logit drop indicates that the masked region is highly influential for the prediction, reflecting stronger explanation faithfulness.

\paragraph{\ding{183} Counterfactual margin (\(m_{\mathrm{cf}}\))}
    quantifies the difference between the highest counterfactual class logit and the masked predicted-class logit:
    \begin{equation}
        m_{\mathrm{cf}} = \max\!\big(0,\, \ell_{\hat{y}_{\mathrm{cf}}}(x^{(q)}) - \ell_{\hat{y}}(x^{(q)})\big),
    \end{equation}
    where \(\hat{y}_{\mathrm{cf}}\) is the highest-scoring counterfactual class.
    A larger margin indicates that the salient region plays a critical role in separating the true class from competing classes, encouraging the model to form sharper and more semantically meaningful decision boundaries.

\paragraph{\ding{184} Saliency concentration (\(c\))}
    measures the fraction of the total saliency sum contained within the top-$q$ region, defined as
    \begin{equation}
        c = \frac{\sum_{u} S_u M_u^{(q)}}{\sum_{u} S_u},
    \end{equation}
    where \(S_u\) denotes the saliency value at spatial location \(u\) and \(M_u^{(q)} \in \{0,1\}\) is the binary mask induced by retaining the top-$q$ fraction of pixels ranked by saliency (averaged over the mini-batch).
    Higher concentration indicates that the model's attention is spatially focused rather than diffuse, yielding more localized and interpretable explanations.

We combine the three components into a composite score
\begin{equation}\label{eq:composite}
s = (\alpha \Delta \ell + \beta m_{\mathrm{cf}}) c^{\gamma},
\end{equation}
where $\alpha$ and $\beta$ weight the decision-level terms $\Delta \ell$ and $m_{\mathrm{cf}}$, and $\gamma$ scales the influence of the saliency concentration $c$.

Each component captures a distinct and complementary aspect of explanation quality: $\Delta\ell$ measures causal influence (whether the salient region affects the prediction), $m_{\text{cf}}$ measures class discriminativeness (whether masking the salient region shifts the decision toward a competing class), and $c$ measures spatial specificity (whether saliency is concentrated rather than diffuse).
Importantly, no single component is sufficient on its own.
For example, a high $\Delta\ell$ with low saliency concentration $c$ indicates that the prediction depends on a broad, unstructured set of features rather than on a compact, interpretable region. Reducing noise in this case would reinforce diffuse attention patterns.
Similarly, a high saliency concentration $c$ with low $\Delta\ell$ means that the model attends to a focused but causally irrelevant region, producing a misleading explanation.
The multiplicative role of $c^\gamma$ in Eq.~(\ref{eq:composite}) ensures that noise is reduced only when causal influence and class discriminativeness are supported by spatially coherent saliency, preventing the reinforcement of any single favorable indicator in isolation.
We note that the explainability signal $s$ is not part of the training objective: the model is trained using the standard cross-entropy loss, and no gradient flows through $s$ back to the model parameters.
This prevents the risks of over-optimization, because the signal $s$ only modulates the noise level applied to the gradient, meaning the model cannot directly optimize for a higher score, without improving its reasoning.

\subsection{XCal-FL Training Procedure}\label{sec:training_procedure}

Our training procedure is detailed in Algorithm \ref{alg:expl_driven_client}.
To satisfy the overall target $(\varepsilon, \delta)$-DP guarantee, the total privacy budget is partitioned into two components: a fixed fraction $\rho$ is allocated to the explainability signal mechanism, $\varepsilon_s = \rho\, \varepsilon$, from which the signal noise scale $\sigma_s$ is calibrated across all training steps, and the remainder is allocated to the gradient perturbations, $\varepsilon_g = (1 - \rho)\,\varepsilon$ (line 3).
Given $\varepsilon_g$ and a failure probability $\delta_g$, we compute a reference noise multiplier $\sigma_{ref}$ via the RDP accountant as the exact value required to achieve $(\varepsilon_g, \delta_g)$-DP under standard DP-SGD with a constant multiplier over all training steps.
To allow dynamic noise scaling in both directions around this reference, we define a symmetric noise band [$\sigma_{min}$, $\sigma_{max}$] = [$(1 - b)\sigma_{ref}, (1 + b)\sigma_{ref}$], where $0 \leq b \leq 1$ (line 4).
Each mini-batch update begins by estimating an explanation map using Grad-CAM to identify the regions that most strongly influence the model's current prediction (lines 5--11).
Concretely, given a Grad-CAM heatmap, we threshold the CAM at the $k$-th largest value and select all pixels with saliency at least this cutoff, with \(k=\lceil qHW\rceil\), where \(H \times W\) denotes the CAM resolution. 
Saliency concentration then measures what fraction of the total saliency contained within this selected region, indicating whether the model's attention is tightly focused or diffused.

\begin{table}[h]
\caption{Main hyper-parameters and notations}
\label{tab:hyperparams}
\begin{center}
\begin{small}
\begin{tabular}{>{\raggedright\arraybackslash}p{0.15\linewidth}>{\raggedright\arraybackslash}p{0.6\linewidth}>{\centering\arraybackslash}p{0.1\linewidth}}
\toprule
\textbf{Notation} & \textbf{Description}  &\textbf{Values}\\
\midrule
$N$ & Number of FL clients &$3, 10, 50$ 
\\
$B$ & Mini-batch size &$32$ 
\\
$T$ & Number of training rounds &$30$ 
\\
$\eta$ & Learning rate &$10^{-3}$ 
\\
$C$ & Gradient clipping norm  &$1.0$ 
\\
$\mathbf{w^{t}}$ & Global model weights at iteration $t$ &
\\
$\mathbf{w^{t}_i}$ & Client's $i$ model weights at iteration $t$ &
\\
$b$ & Noise band expansion factor ($0 \leq b \leq 1$) & $0.2$\\
$\rho$ & Fraction of $\varepsilon$ allocated to composite signal & 0.1 \\
$\sigma_{ref}$ & Reference DP noise multiplier & \\
$\sigma_{min}$ & Lower band limit & \\
$\sigma_{max}$ & Upper band limit & \\
$\sigma_{step}$ & DP noise multiplier at a batch step & \\
$\delta$ & DP failure probability &$10^{-5}$ 
\\
$q$ & Fraction of most salient pixels to mask & $0.2$
\\
$\tau$ & Smoothing coefficient & $0.2$ 
\\
$\Delta\ell$ & Logit change &
\\
$m_{cf}$ & Counterfactual margin &
\\
$c$ & Saliency concentration & 
\\
$\alpha$ & Weight of Logit change $\Delta\ell$ &  
\\
$\beta$ & Weight of Counterfactual margin $m_{\mathrm{cf}}$ & 
\\
$\gamma$ & Weight of Saliency concentration $c$ &  \\
\bottomrule
\end{tabular}
\end{small}
\end{center}
\end{table}

Next, we evaluate how important this salient region is by performing both a forward and a masked-forward pass.
From these runs, we compute three explainability indicators: (i) prediction logit change (line 9), (ii) counterfactual class margin (line 10), and (iii) saliency concentration (line 11).
Together, these components quantify how much the model depends on meaningful, well-localized features.
These three indicators are combined into a unified weighted explainability score $s_{\mathrm{raw}}$ (lines~12--13).
Because this raw score is derived directly from the private mini-batch, it must be privatized before it can be used to modulate the training process. 
We explicitly bound its sensitivity to 1 via clipping and apply the Gaussian mechanism by adding noise scaled by a constant $\sigma_s$ to obtain the differentially private signal $\tilde{s}_{\mathrm{raw}}$ (line~14).
The privatized signal is then smoothed to yield the final explainability score $s$ (line~15), which securely modulates the noise multiplier $\sigma_{\text{step}}$ applied in the model's parameter update for the current mini-batch (line~16).
Before the noisy update is applied, the candidate $\sigma_{step}$ is projected onto the budget-feasible portion of the band (line 17): a multiplier $\sigma$ is feasible if the cumulative RDP cost after one additional step at $\sigma$ still converts to at most $(\varepsilon_g, \delta_g)$.
Since the per-step RDP cost decreases monotonically in $\sigma$, this projection either leaves $\sigma_{step}$ unchanged, raises it to the smallest feasible value, or, when even $\sigma_{max}$ is infeasible, halts noisy updates for the remainder of training (lines 18--19).
The realized cost of the admitted step is then added to the accountant state before the update is applied (line 21).
During each update, we apply standard DP-SGD, where gradients are clipped to limit the influence of any individual sample, and Gaussian noise, scaled according to the clipping norm and the adaptively chosen $\sigma_{\text{step}}$, is added before updating the client's model parameters (lines 22--25).
This ensures that every mini-batch contributes only a controlled and privacy-preserving amount of information to the training process.

The cumulative privacy cost of the gradient updates is tracked by the RDP accountant~\cite{mironov2017renyi,wang2019subsampled}\footnote{Our implementation uses fixed-size mini-batch sampling rather than the Poisson subsampling assumed by RDP accounting. This practice is widely adopted in the literature, though the resulting privacy parameters should be interpreted as approximate~\cite{ponomareva2023dp}.} at every training step, immediately before the noisy update is applied, and the privacy filter described above guarantees that it never exceeds $(\varepsilon_g, \delta_g)$.
Therefore, the composed cost of both mechanisms therefore adheres to the global budget (Appendix~\ref{sec:appendix_privacy_guarantee}).

\begin{algorithm}[ht]
\caption{\textsc{XCal-Client}: Explainability-driven FL Training with Differentially Private Noise Calibration}
\label{alg:expl_driven_client}
\KwIn{Client $i$ training data $\mathcal{D}_i=\{X_i, Y_i\}$, $\mathbf{w}^{t-1}$;
Training parameters: $\eta$, $B$;
DP parameters: $C$, $\sigma_{ref}$, $b$, $\delta$, $\rho$;
Explainability parameters: $q,\ \alpha,\ \beta,\ \gamma,\ \tau$}
\KwOut{Round $t$ client model parameters}
Initialize local model $\mathbf{w}_i^{t} \gets \mathbf{w}^{t-1}$\;
Initialize explainability signal $s \gets 0$\;
Initialize RDP accountant with target budget ($\varepsilon_g$, $\delta_g$)\;
$\sigma_{min} \gets (1 - b)\sigma_{ref}$; $\sigma_{max} \gets (1 + b)\sigma_{ref}$\;
\For{each mini-batch $(\mathbf{x}, \mathbf{y})$ of size $B$ sampled from $\mathcal{D}_i$}{
  \tcp{Compute the explainability controls:}
  Compute model prediction $\tilde{y}$ for $\mathbf{x}$\;
  Compute Grad-CAM $\mathbf{S}$ for $(\mathbf{x}, \tilde{y})$ under $\mathbf{w}_i^{t}$\;
  Construct top-$q$ mask from $\mathbf{S}$\;
  Compute logit change $\Delta \ell$ (before and after masking)\;
  Compute counterfactual margin $m_{\mathrm{cf}}$\;
  Compute saliency concentration $c$\;
  \tcp{Compose the explainability signal:}
  $s_{\mathrm{raw}} \gets (\alpha \cdot \Delta \ell + \beta \cdot m_{\mathrm{cf}})\, c^{\gamma}$\;
  $s_{\mathrm{raw}} \gets \textsc{Clip}(s_{\mathrm{raw}}, 0, 1)$\;
  
  \tcp{Privatize the explainability signal:}
  $\tilde{s}_{\mathrm{raw}} \gets \textsc{Clip}\bigl(s_{\mathrm{raw}} + \mathcal{N}(0, \sigma_s^2), 0, 1\bigr)$\;
  \tcp{Smooth the explainability signal:}
  $s \gets (1 - \tau)\, s + \tau\, \tilde{s}_{\mathrm{raw}}$\;
  \tcp{Adjust DP noise based on Xpl. signal:}
  $\sigma_{\text{step}} \gets \sigma_{\max} - s\,(\sigma_{\max} - \sigma_{\min})$\;
  \tcp{Privacy filter:}
  $\sigma_{\text{step}} \gets$ smallest admissible $\sigma \in [\sigma_{\text{step}}, \sigma_{\max}]$\;
  \If{no such $\sigma$ exists}{
    \Return $(\mathbf{w}_i^{t})$ \tcp*{Budget exhausted}
  }
  \tcp{DP Accounting:}
  Charge the accountant with a step at $\sigma_{\text{step}}$\;
  \tcp{DP-SGD update for this mini-batch:}
  $g_k \gets \nabla_{\mathbf{w}} \mathcal{L}(\mathbf{w}_i^{t}; x_k, y_k)$ for $k = 1,\dots,B$\;
  $g_k \gets \frac{g_k}{\max\bigl(1,\ \lVert g_k \rVert_2 / C\bigr)}$ for all $k$ \tcp*{Clipping to $C$}
  $\tilde{g} \gets \frac{1}{B}\Bigl(\sum_{k=1}^{B} g_k + \mathcal{N}(0,\ \sigma_{\text{step}}^2 C^2 \mathbf{I})\Bigr)$\;
  $\mathbf{w}_i^{t} \gets \mathbf{w}_i^{t} - \eta\, \tilde{g}$\;
  Update cumulative privacy loss $\varepsilon_{\mathrm{cum},i}$\;
}
\KwRet{$\mathbf{w}_i^{(t)}$}
\end{algorithm}

\subsection{Noise Calibration as Signal-Dependent Regularization}
\label{subsec:noise_calibration_regularizer}

Beyond enforcing the privacy budget, the adaptive calculation of $\sigma_{\text{step}}$ (line~15 in Algorithm \ref{alg:expl_driven_client}) admits a learning-theoretic interpretation: it implements a form of signal-dependent regularization on the DP-SGD gradient update, modulating the noise term $\mathcal{N}(0, \sigma^2 C^2 \mathbf{I})$ according to the current quality of the model's explanations.
Under DP-SGD, the noise added to clipped gradients controls the trade-off between privacy and learning fidelity, while inherently acting as a regularizer against overfitting \citep{abadi2016deeplearningwithdp, dwork2015generalization, papernot2021deploy}.
Our adaptive mechanism draws on this insight: when $s$ is high, the model grounds its predictions in coherent evidence, so reducing $\sigma_{\text{step}}$ preserves informative gradient components.
Conversely, when $s$ is low, the gradient signal is unreliable, and higher noise acts as regularization that prevents the model from overfitting to spurious patterns.
Viewed through the privacy budget, this regularization corresponds to a redistribution of privacy expenditure over training: steps with high explanation quality receive less noise and consume budget faster, while steps with low explanation quality receive more noise and conserve budget.
XCal-FL therefore spends the same total privacy budget as the static baseline, but allocates it to the training steps where the gradient signal is most informative, dynamically adjusting the per-step privacy cost while the total privacy loss remains strictly upper-bounded by the formal $(\varepsilon, \delta)$-DP guarantee (Appendix~\ref{sec:appendix_privacy_guarantee}).

An important consequence of this interaction is that predictive performance and explanation fidelity may respond differently to cumulative privacy loss.
While classification depends only on producing the correct output ranking and can tolerate moderate gradient directional corruption \cite{duan2025analyzing}, explanation fidelity relies on fine-grained spatial coherence, which is highly sensitive to perturbations and inherently unstable \cite{zaher2024manifold, ye2025gaussian, woerl2023initialization}.
Since DP-SGD amplifies this stochasticity through per-step noise injection, the relationship between cumulative privacy loss and explanation fidelity need not follow the approximately linear trajectory typically observed for predictive performance.
This asymmetry has a direct practical implication: because the gradient components most relevant to explanation quality may converge at a different rate than those driving predictive accuracy, a single noise schedule optimized for one objective is unlikely to be simultaneously optimal for the other.
XCal-FL addresses this by allowing the noise multiplier to track explanation quality dynamically, effectively decoupling the optimization trajectories of the two objectives within the same privacy budget.
We examine this hypothesis empirically in Section~\ref{sec:privacy_performance_fidelity_tradeoff}.

\section{Empirical Evaluation on Medical Imaging}
\label{sec:eval}

\begin{table*}[t]
\begin{centering}
\caption{F1-score across datasets for our adaptive training (XCal-FL), compared with FL+DP (static DP noise) and AGP \cite{li2023balancing}. Each privacy setting is defined by a target $\varepsilon$ and its corresponding adaptive noise range $[\sigma_{\min}, \sigma_{\max}]$. Results are shown for a FL setting comprised of $N=3$ clients, training an EfficientNet-B0 model with a balanced XCal-FL configuration ($\alpha=\beta=\gamma=1$).}
\label{tab:results_f1_score_per_eps}
\begin{minipage}{\linewidth}
\resizebox{1\linewidth}{!}{%
\begin{tabular}{l|>{\centering\arraybackslash}p{0.08\linewidth}>{\centering\arraybackslash}p{0.08\linewidth}>{\centering\arraybackslash}p{0.08\linewidth}>{\centering\arraybackslash}p{0.08\linewidth}>{\centering\arraybackslash}p{0.08\linewidth}>{\centering\arraybackslash}p{0.08\linewidth}>{\centering\arraybackslash}p{0.08\linewidth}
>{\centering\arraybackslash}p{0.08\linewidth}>{\centering\arraybackslash}p{0.08\linewidth}>{\centering\arraybackslash}p{0.08\linewidth}>{\centering\arraybackslash}p{0.08\linewidth}>{\centering\arraybackslash}p{0.08\linewidth}>{\centering\arraybackslash}p{0.08\linewidth}>{\centering\arraybackslash}p{0.08\linewidth}}\toprule
\textbf{Dataset}&  \multicolumn{2}{c}{No DP} &\multicolumn{3}{c}{$\varepsilon=0.05$ ( $19.68 \le \sigma \le 29.52$)}& \multicolumn{3}{c}{$\varepsilon=0.5$ ( $2.77\le \sigma \le 4.16$)}& \multicolumn{3}{c}{$\varepsilon=5.0$ ( $0.72 \le \sigma \le  1.08$)}& \multicolumn{3}{c}{$\varepsilon=50.0$ ( $0.23\le \sigma \le 0.35$)}\\\cmidrule(lr){2-3}\cmidrule(lr){4-6}\cmidrule(lr){7-9}\cmidrule(lr){10-12}\cmidrule(lr){13-15}
 &  Centr. & FL &FL+DP&AGP& XCal-FL& FL+DP&AGP& XCal-FL& FL+DP& AGP& XCal-FL& FL+DP& AGP&XCal-FL\\ \midrule
Blood Cells& \textbf{0.862}& 0.844 &0.235
&0.285&\textbf{0.290}& 0.654
&0.663&\textbf{0.716}& 0.811
&  0.807&\textbf{0.834}& 0.840 
& 0.849&\textbf{0.856} \\
Chest X-rays & \textbf{0.680}&  0.669 &0.478
&0.465&   \textbf{0.492}& 0.529 
&0.542& \textbf{0.600}& 0.599 
& 0.569& \textbf{0.623}& \textbf{0.658} 
&  0.596& 0.629\\
Melanoma & \textbf{0.896} & 0.777 &0.511&0.464&\textbf{0.530}& 0.693&0.709& \textbf{0.712}& 0.745& 0.759&\textbf{0.761}& 0.777 & 0.855& \textbf{0.886}\\ \bottomrule
\end{tabular}
}

\end{minipage}
\end{centering}
\end{table*}

We evaluated our method on public medical imaging datasets (Table~\ref{tab:datasets} in Appendix \ref{sec:appendix_eval_datasets_desc}), including blood cell classification, pneumonia detection using chest X-rays, and melanoma detection using dermoscopic images, providing a diverse testbed to assess the effects of adaptive DP on performance and explainability.
We used a federated learning setup to train two types of Convolutional Neural Networks (CNNs) with different architectures: EfficientNet-B0 \cite{tan2019efficientnet} with $5.3M$ parameters and ResNet-18 \citep{he2016deep} with $11.7M$ parameters.
We assume a cross-silo environment \citep{yang2019federated} comprising a small number of clients (e.g., organizations such as hospitals) with robust and reliable computational resources.
This assumption aligns with practical cross-silo FL settings, particularly in the medical domain, where clinical image data is primarily held by large organizations, such as hospitals, rather than on edge devices.
Hence, we varied the number of clients participating in the training, with a default of $N=3$ clients, where each client stored the same amount of data (sampled with replacement from our dataset when the dataset size was too small for a large number of clients).
To simulate a non-IID environment, we performed the data splitting among clients while creating both a class label shift and feature shift (covariate shift).
Covariate shift was simulated by applying a linearly varying brightness perturbation across clients, so that each client observes images under systematically different illumination conditions while the underlying label structure is preserved.
We trained each model over $T = 30$ communication rounds (Table~\ref{tab:hyperparams}).

Models were trained under different configurations of explainability components described in Section~\ref{sec:training_procedure}, with gradient privacy controlled via an adaptive noise multiplier $\sigma_{\text{step}}$ and an overall $\delta$ fixed to $10^{-5}$.
We set the fraction of the budget dedicated to the explainability signal to $\rho = 0.10$, meaning $\varepsilon_s = 0.10\varepsilon$ and $\varepsilon_g = 0.90\varepsilon$.
The noise band half-width factor is set to b = 0.2 in our experiments, yielding a symmetric band $\sigma_{step} \in [0.8\sigma_{ref}, 1.2\sigma_{ref}]$.
We set the saliency-based mask fraction to $q=0.2$.
We compare our approach against centralized (non-FL) training, standard FL without DP and AGP \cite{li2023balancing}.
For fair comparison, we ran AGP under our FL setup with a representative Gaussian noise scale averaged over the range for each target $\varepsilon$.
We then fixed the fraction of important channels to $r=0.2$ to match our setting ($q=0.2$).
For each target $\varepsilon$, the static FL+DP baseline uses the constant multiplier that exactly exhausts $\varepsilon$ over $T$ rounds, so that all comparisons between XCal-FL and the baselines are made at matched total privacy expenditure.

After training each FL model, we evaluated predictive performance on the test set using the F1-score, alongside the cumulative privacy loss incurred during training.
The F1-score captures both sensitivity and precision, making it well suited to medical classification tasks that often involve imbalanced class or feature distributions.
Then, we evaluated explainability under adaptive DP noise through quantitative and qualitative analyses.
Quantitatively, we report the relative percentage change in ROAD scores on our test set \citep{rong2022consistent}.
Qualitatively, we generate Grad-CAMs for each model's predicted class and overlay them on test images to examine how increasing noise shifts attention from diagnostic regions to diffuse patterns.
Finally, we quantitatively evaluate the trade-off between privacy, predictive performance and explainability.

\section{Results}
\label{sec:results}

\subsection{Privacy-Utility Trade-off}\label{subsec:results_priv_utility_tradeoff}

The adaptive training procedure consistently improved global FL model performance across different DP configurations compared to static noise training (Table \ref{tab:results_f1_score_per_eps}), addressing the performance aspect of \textbf{RQ1}.
For example, in blood cell type classification, adaptive training achieved an F1-score of 0.716 under a noise range of $2.77 \le \sigma \le 4.16$ (around an $\varepsilon = 0.5$ guarantee), compared to 0.654 with static noise.
Nevertheless, both adaptive and static DP-trained models remained less accurate than the centralized model and the corresponding non-DP FL model, which achieved F1-scores of $0.86$ and $0.84$, respectively.
As expected, we observed an increasing trend in the model's performance as the DP guarantee weakened (lower noise multiplier $\sigma$ leading to higher $\varepsilon$), i.e., for a noise setting of $0.23 \le \sigma \le 0.35$, the performance of all models approached that of their corresponding non-DP FL counterparts.
Our adaptive method consistently outperformed AGP in F1 across most datasets and privacy settings. 
Equally weighting all three signal components (logit change, counterfactual margin, and saliency concentration with $\alpha=\beta=\gamma=1$) yielded higher utility than using any component in isolation.
An ablation study (Section \ref{sec:ablation_study} and Appendix~\ref{sec:appendix_ablation_study}) confirmed that this configuration provides strong performance and explainability across all settings.

To further assess the robustness of XCal-FL, we examined how the number of participating clients and the degree of data heterogeneity affect model performance (Figure~\ref{fig:f1_per_clients_per_iid}).
Under IID partitioning, both EfficientNet-B0 and ResNet-18 improved steadily as the number of clients increased from $N{=}3$ to $N{=}50$.
A similar trend appeared under label shift, a non-IID setting in which class proportions vary across clients: ResNet-18 recovered from $0.70$ at $N{=}3$ to $0.86$ at $N{=}50$, suggesting that aggregating updates from a larger, more label-diverse client pool helps compensate for local class imbalances.
Covariate shift, a more severe non-IID setting in which image features themselves differ across clients, proved considerably more challenging: EfficientNet-B0 plateaued around $0.53$ and ResNet-18 around $0.67$ at $N{=}50$, indicating that when clients differ in their input distributions rather than merely in class proportions, increasing the number of participants yields diminishing returns.
Overall, ResNet-18 benefited more from scaling the client set than EfficientNet-B0 across all heterogeneity conditions, consistently achieving a smaller gap to its centralized upper bound (additional results for all datasets are provided in Appendix \ref{sec:additional_per_num_client_iid_results}).

\begin{figure}[h]
    \centering
    \begin{subfigure}{1\linewidth}
    \centering
    \includegraphics[width=1\textwidth]{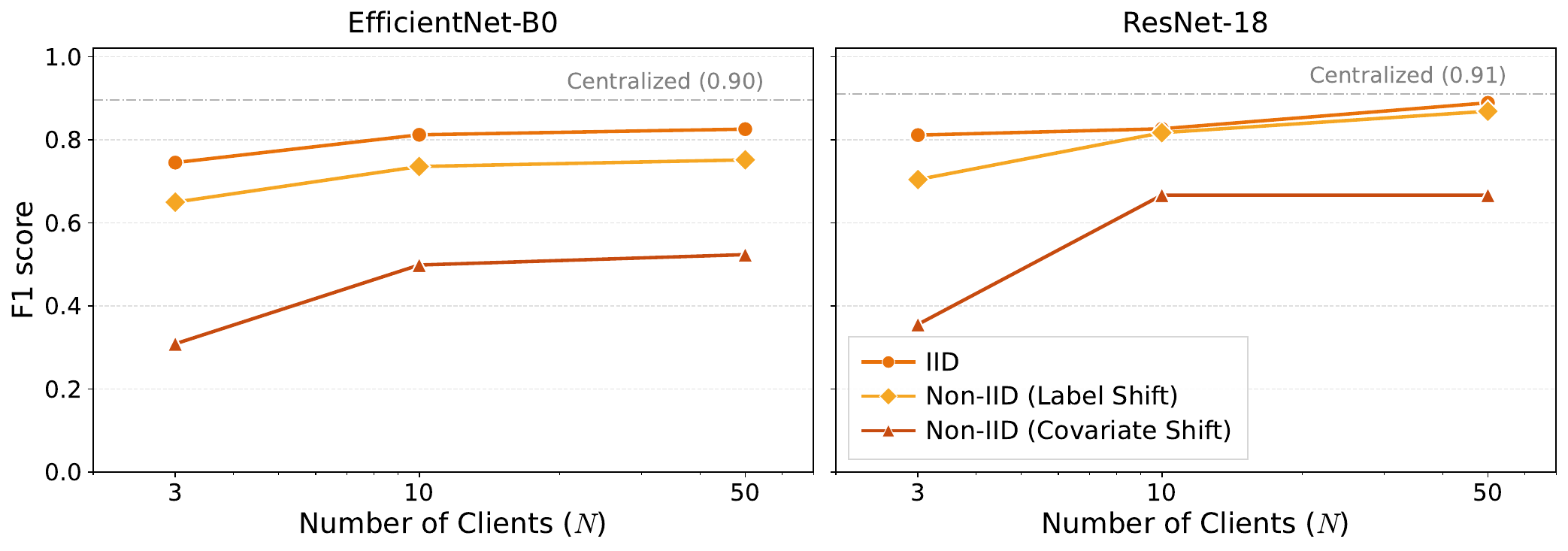}
    \subcaption{Performance (F1 score)}
    \label{fig:f1_per_clients_per_iid}
    \end{subfigure}
    \begin{subfigure}{1\linewidth}
    \centering
    \includegraphics[width=1\textwidth]{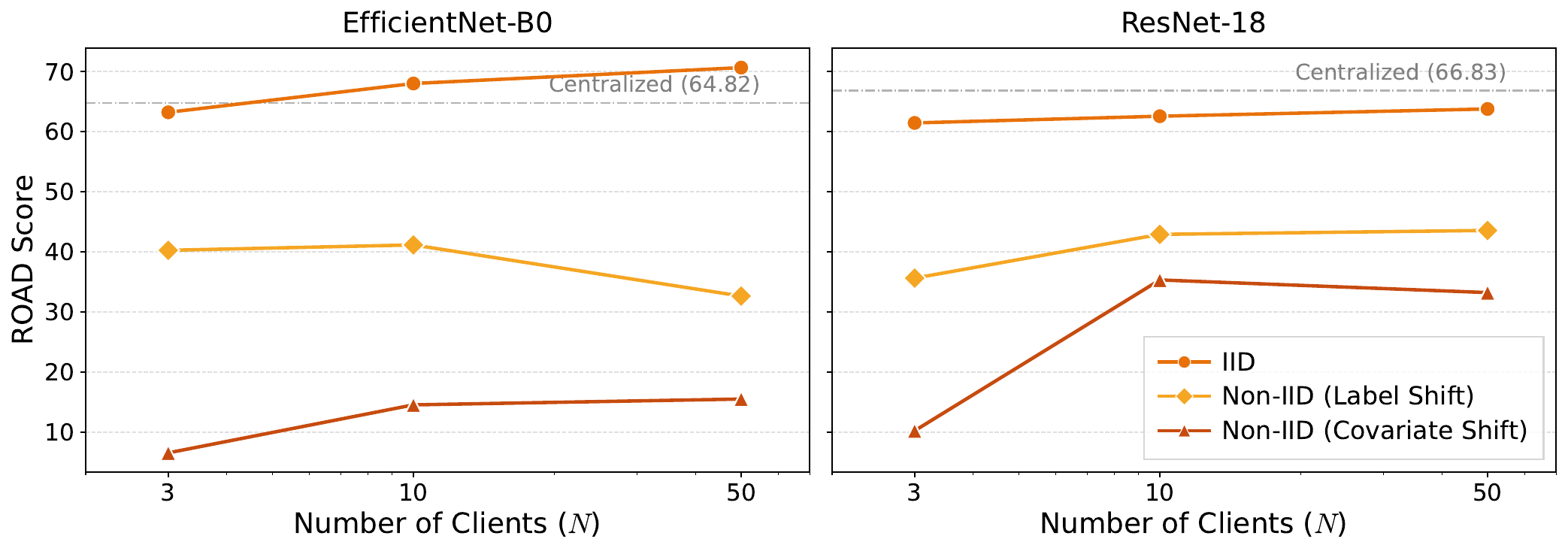}
    \subcaption{Explainability (ROAD score)}
    \label{fig:road_per_clients_per_iid}
    \end{subfigure}
    \caption{F1 and ROAD scores of the global models trained on the Melanoma dataset under our adaptive DP method (XCal-FL) as a function of the number of federated clients under IID and non-IID data distribution settings ($\varepsilon=5$).
    \label{fig:per_clients_per_iid}}
    \Description[]{}
\end{figure}

\begin{table*}[h]
\begin{centering}
\caption{ROAD score (explanation fidelity), averaged over relevance thresholds (top 20\%, 40\%, 60\%, and 80\% pixels), for global models across datasets using XCal-FL, compared with FL+DP (static DP noise) and AGP \cite{li2023balancing}. Each privacy setting is defined by a target $\varepsilon$ and its corresponding adaptive noise range $[\sigma_{\min}, \sigma_{\max}]$. Results are shown for a FL setting comprised of $N=3$ clients, training an EfficientNet-B0 model with a balanced XCal-FL configuration ($\alpha=\beta=\gamma=1$).}
\label{tab:results_explainability_road_per_eps}
\begin{minipage}{\linewidth}
\resizebox{1\linewidth}{!}{%
\begin{tabular}{l|>{\centering\arraybackslash}p{0.08\linewidth}>{\centering\arraybackslash}p{0.08\linewidth}>{\centering\arraybackslash}p{0.08\linewidth}>{\centering\arraybackslash}p{0.08\linewidth}>{\centering\arraybackslash}p{0.08\linewidth}>{\centering\arraybackslash}p{0.08\linewidth}>{\centering\arraybackslash}p{0.08\linewidth}
>{\centering\arraybackslash}p{0.08\linewidth}>{\centering\arraybackslash}p{0.08\linewidth}>{\centering\arraybackslash}p{0.08\linewidth}>{\centering\arraybackslash}p{0.08\linewidth}>{\centering\arraybackslash}p{0.08\linewidth}>{\centering\arraybackslash}p{0.08\linewidth}>{\centering\arraybackslash}p{0.08\linewidth}}
\toprule
\textbf{Dataset}&  \multicolumn{2}{c}{No DP} &\multicolumn{3}{c}{$\varepsilon=0.05$ ( $19.68\le \sigma \le 29.52$)}& \multicolumn{3}{c}{$\varepsilon=0.5$ ( $2.77\le \sigma \le 4.16$)}& \multicolumn{3}{c}{$\varepsilon=5.0$ ( $0.72 \le \sigma \le 1.08$)}& \multicolumn{3}{c}{$\varepsilon=50.0$ ( $0.23\le \sigma \le 0.35$)}\\\cmidrule(lr){2-3}\cmidrule(lr){4-6}\cmidrule(lr){7-9}\cmidrule(lr){10-12}\cmidrule(lr){13-15}
 &  Centr. & FL &FL+DP &AGP &  XCal-FL& FL+DP &AGP & XCal-FL& FL+DP & AGP & XCal-FL& FL+DP & AGP &XCal-FL\\ \midrule
Blood Cells& 202.955&  \textbf{283.255}&25.938
&17.564&\textbf{35.626}&15.561
& 13.456& \textbf{80.568}&86.346
& \textbf{116.614}& 93.803& 113.193 
& 139.758& \textbf{264.896} \\
Chest X-rays & \textbf{407.588}& 187.114 &3.427
&3.536& \textbf{5.758} & 5.742
&\textbf{9.206}&7.229& 9.206& 12.093
& \textbf{13.925}& 4.913 
& \textbf{25.439}& 9.130\\
Melanoma & \textbf{64.819}&  24.921&0.474&2.296&\textbf{8.012}& 31.272&32.610& \textbf{42.661}& 54.056 & 61.072&\textbf{63.220} & 81.462& 101.739 & \textbf{113.956}  \\
\bottomrule
\end{tabular}
}
\end{minipage}
\end{centering}
\end{table*}

\subsection{Model Explainability}\label{subsec:results_explainability}

\begin{figure}
    \centering
    \begin{subfigure}[t]{1\linewidth}
        \centering
        \includegraphics[width=1\linewidth]{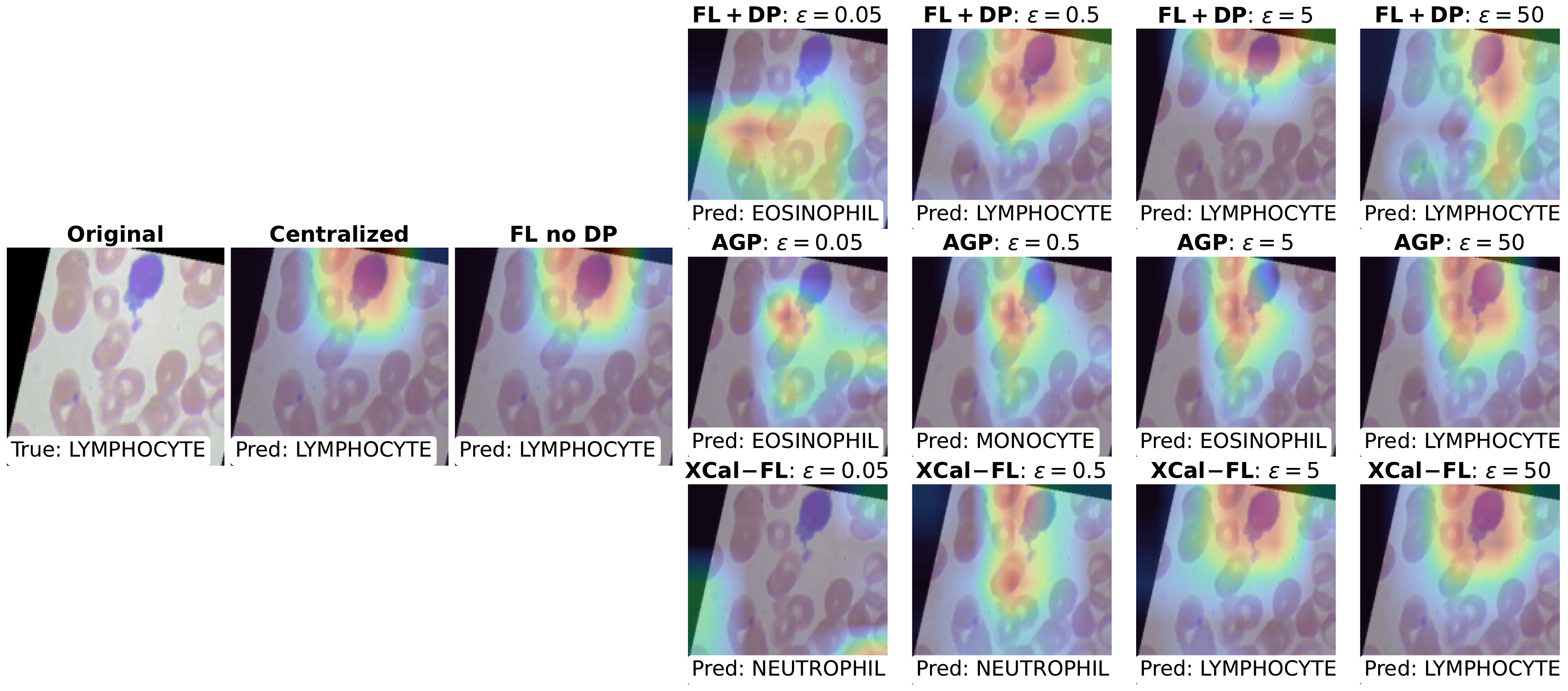}
        \subcaption{Blood cell type detection (class: Lymphocyte)}
        \label{fig:visual_cam_bc}
    \end{subfigure}
    \hfill
    \begin{subfigure}[t]{1\linewidth}
        \centering
        \includegraphics[width=1\linewidth]{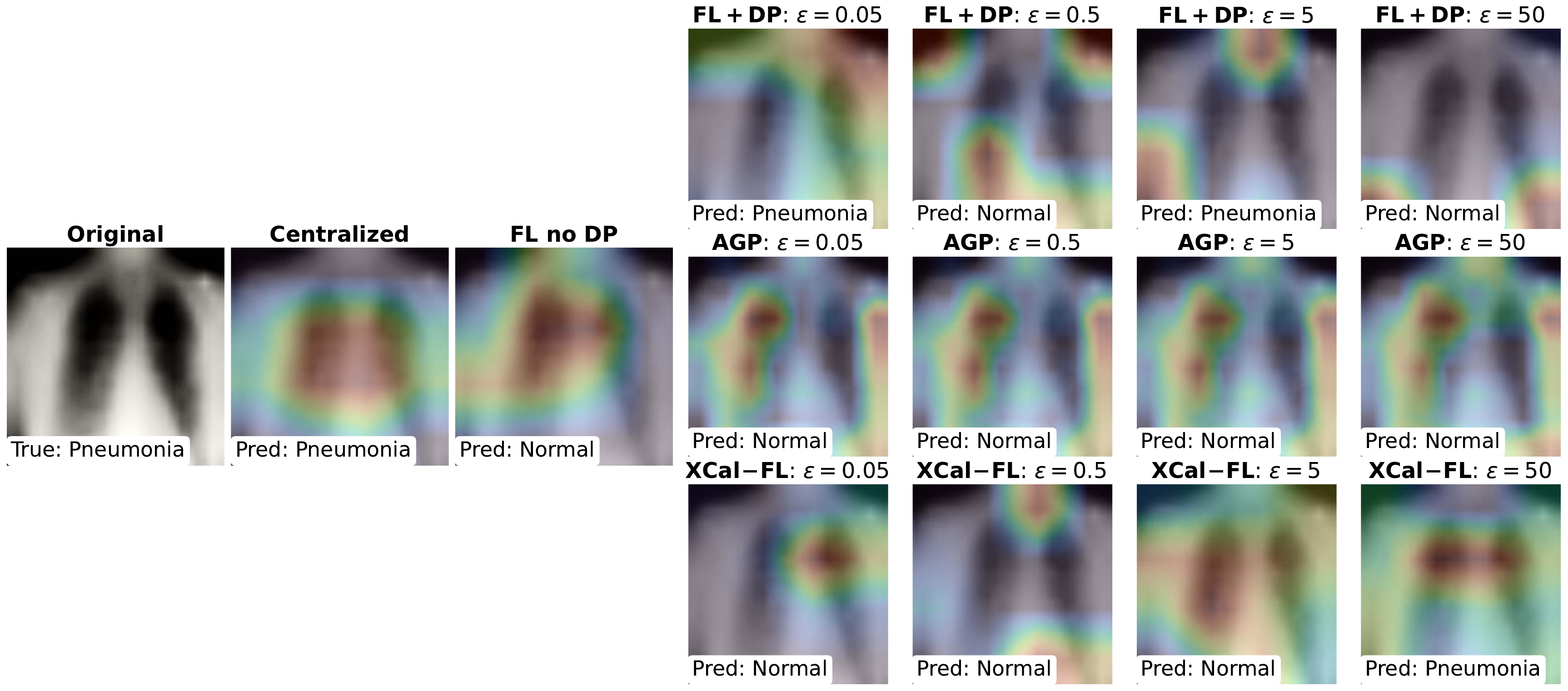}
        \subcaption{Pneumonia detection using Chest X-Rays (class: Pneumonia)}
        \label{fig:visual_cam_xr}
    \end{subfigure}
        \hfill
    \begin{subfigure}[t]{1\linewidth}
        \centering
        \includegraphics[width=1\linewidth]{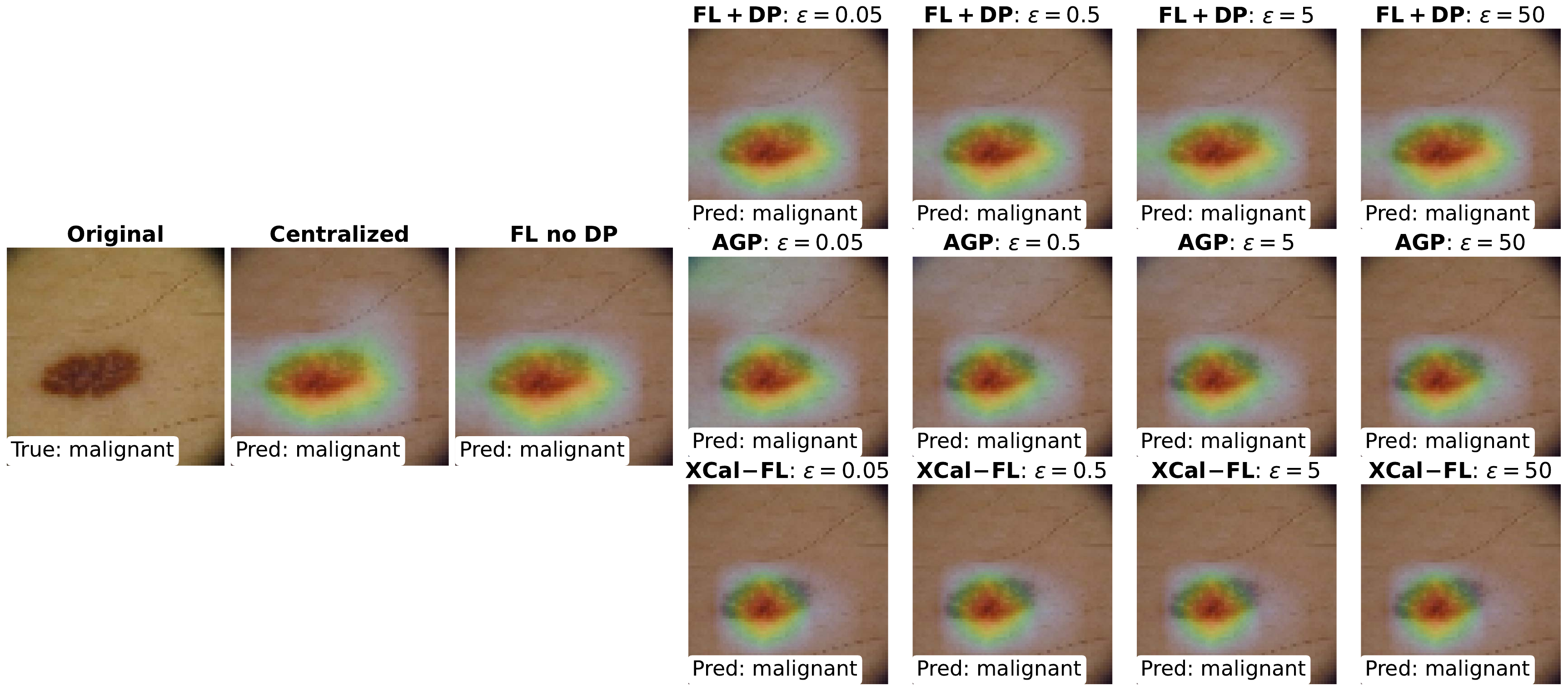}
        \subcaption{Melanoma detection using skin scans (class: Malignant)}
        \label{fig:visual_cam_ml}
    \end{subfigure}
    \caption{Qualitative analysis of model explainability using Grad-CAM visualizations for representative test samples across datasets and DP guarantees ($\alpha=\beta=\gamma=1$). CAMs are generated from the final convolutional layer of the global FL model for the predicted class, with red regions indicating stronger influence.
    }
    \label{fig:fl_cams_dynamic_static_comp}
    \Description[]{}
\end{figure}

Our adaptive FL training procedure led to a larger average drop in model confidence compared to static training, addressing the explainability aspect of \textbf{RQ1}.
A higher drop indicates that masking a small set of salient regions causes a stronger reduction in predicted confidence, reflecting more causal and informative saliency.
Using the ROAD metric averaged across multiple relevance percentiles, we observed a steady increase over our dynamic training, reaching final scores of 80.57\%, 7.23\% and 42.66\% for the blood cell type, pneumonia detection and melanoma classification tasks, respectively (for noise multiplier $2.77 \le \sigma \le 4.16$).
In particular, the blood cell model exhibited a confidence reduction of over 80\% when the most salient regions were removed, indicating a strong causal dependence of its predictions on the identified explanatory features.
In contrast, the static training model showed much lower sensitivity with only a 15.56\%, 5.74\% and 31.27\% confidence drops under the same noise setting for the blood cell type, pneumonia detection and melanoma classification tasks, respectively.
This suggests that the predictions of static training models were only weakly affected by the removal of salient evidence and thus less dependent on meaningful explanatory regions.
These results show that explainability-driven training strengthens the model's reliance on meaningful salient regions, because perturbing these areas leads to a larger and more systematic change in the model's predictions.
Similarly to predictive performance, equally weighting all three signal components ($\alpha=\beta=\gamma=1$) yielded stronger explainability than using any component in isolation (section \ref{sec:ablation_study} and Appendix \ref{sec:appendix_ablation_study}).
For the Chest X-ray pneumonia task, AGP achieved a higher average ROAD score than our method, which we attribute to its explicit suppression of noise on highly salient channels that better preserves gradient-based explanations in this more challenging setting (which is trained on grayscale images).

Furthermore, we observed a clear relationship between the level of DP noise and model explainability, measured by the ROAD metric (Table \ref{tab:results_explainability_road_per_eps}).
As the level of noise increases, gradient directions become less stable and saliency maps less coherent, leading to lower ROAD scores as the model's attention becomes more dispersed across the image.
Conversely, as the DP guarantee weakens (less noise is injected to the model's gradients), saliency maps become sharper and the ROAD metric improves.
This is visually illustrated in Fig. \ref{fig:fl_cams_dynamic_static_comp}, which compares the CAMs of a single test positive image (positive to the existence of a medical issue) in each of our evaluated datasets across varying DP levels, compared to the two baselines.
From visual inspection of the highlighted CAMs, we observed that our adaptive approach produced more concentrated regions, providing a clearer and more focused explanation.
This effect was most pronounced in blood cell classification, where the CAM became smaller and more focused as the amount of injected noise decreased (increasing $\varepsilon$).

In terms of the effect of the FL client set size and their data heterogeneity, as shown in Figure~\ref{fig:road_per_clients_per_iid}, ROAD explainability scores followed a consistent hierarchy across both models: IID yielded the highest scores, followed by label shift, with covariate shift producing the lowest.
Increasing the number of clients had a small positive effect under IID and label shift, consistent with the regularizing benefit of aggregating diverse client updates.
Notably, ResNet-18 exhibited a smaller gap between distribution types, with its ROAD score under covariate shift reaching approximately 35\% compared to under 15\% for EfficientNet-B0, suggesting greater robustness to visual perturbations in non-IID conditions.

\begin{figure*}
    \centering
    \begin{subfigure}{0.321\linewidth}
        \centering
        \includegraphics[width=1\textwidth]{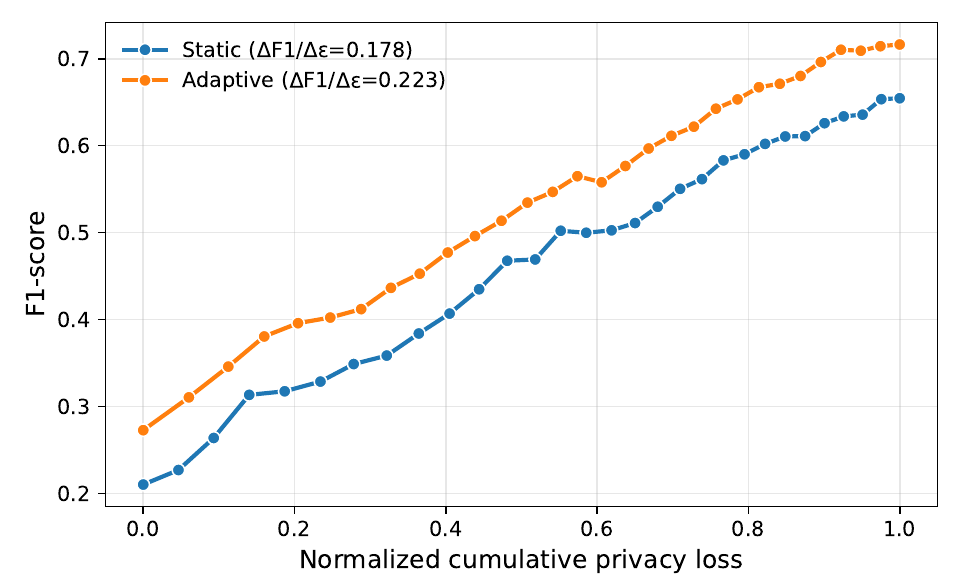}
        \subcaption{Blood cells}
        \label{fig:bc_priv_f1_efficiency}
    \end{subfigure}%
    \begin{subfigure}{0.336\linewidth}
    \centering
    \includegraphics[width=1\textwidth]{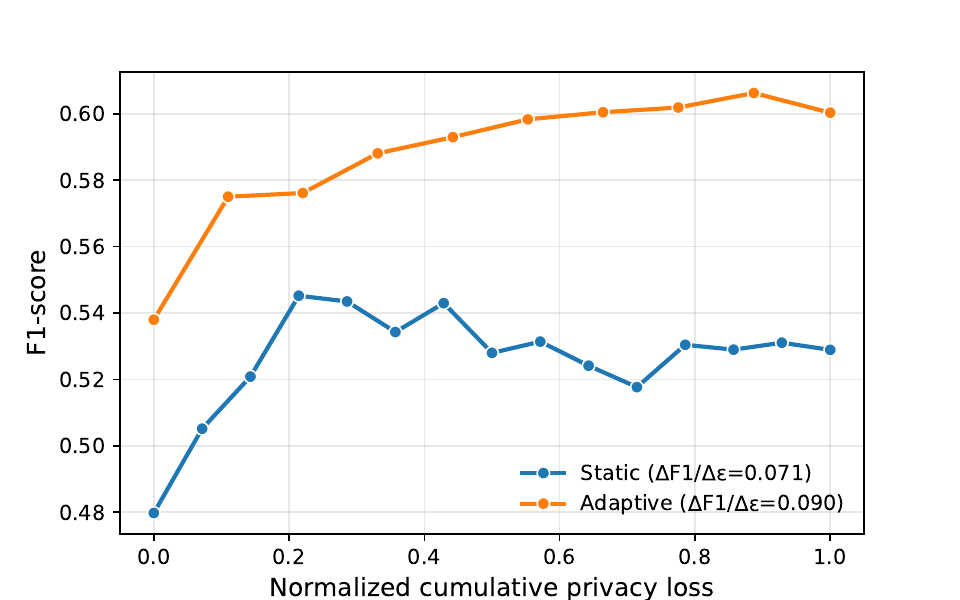}
    \subcaption{Chest X-rays}
    \label{fig:xr_priv_f1_efficiency}
    \end{subfigure}%
    \begin{subfigure}{0.338\linewidth}
    \centering
    \includegraphics[width=1\textwidth]{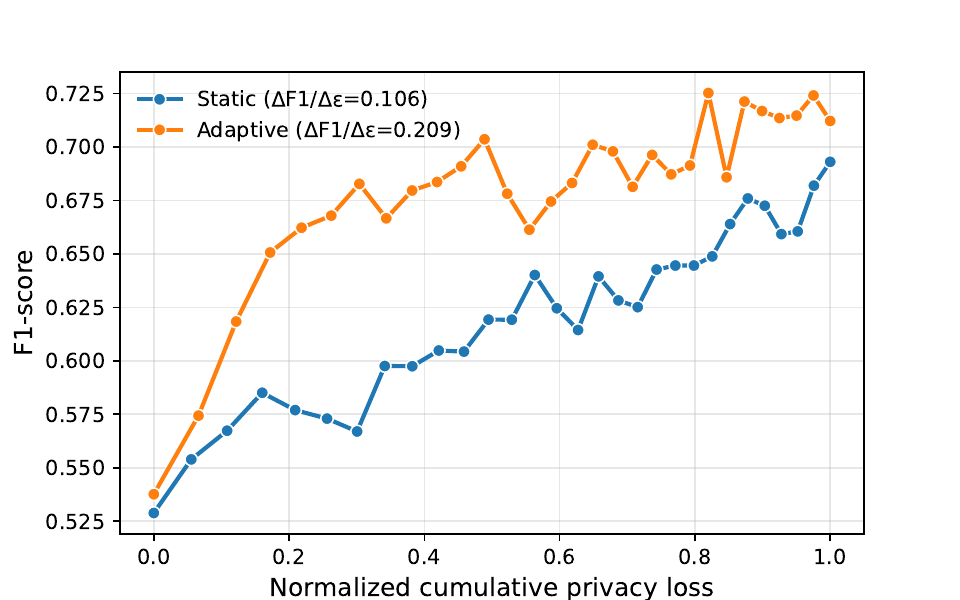}
    \subcaption{Melanoma}
    \label{fig:ml_priv_f1_efficiency}
    \end{subfigure}
    \caption{Privacy-utility efficiency of global FL models across datasets, comparing FL+DP training with \textcolor{MidnightBlue}{static} DP noise and our \textcolor{BurntOrange}{adaptive} XCal-FL training ($\varepsilon = 0.5$ with $\alpha=\beta=\gamma=1$). Efficiency is defined as the ratio of F1 score to the cumulative privacy loss.}
    \label{fig:priv_efficiency}
    \Description[]{}
\end{figure*}

\begin{figure*}
    \centering
    \begin{subfigure}{0.31\linewidth}
        \centering
        \includegraphics[width=1\linewidth]{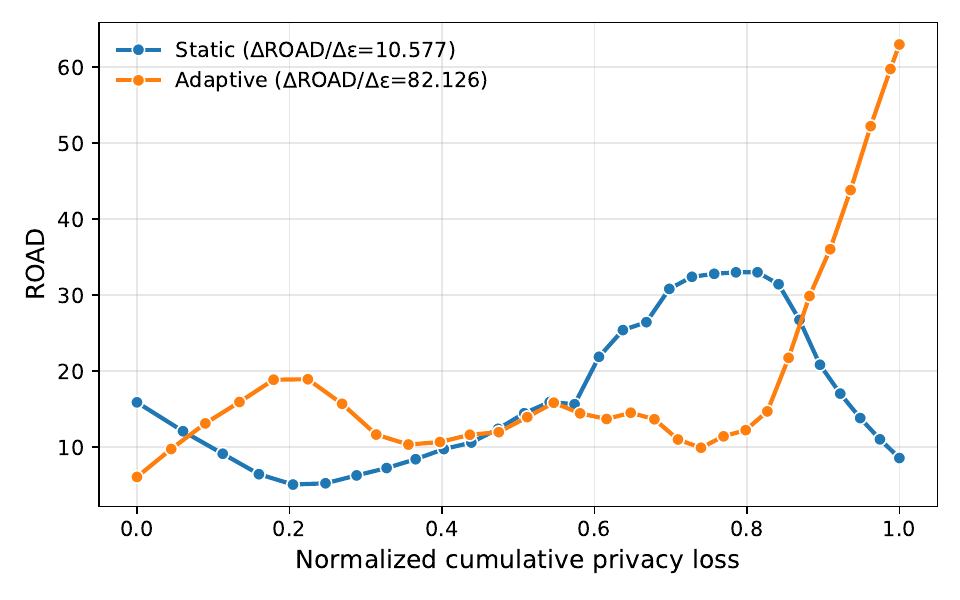}
        \subcaption{Blood cells}
        \label{fig:bc_priv_expl_efficiency}
    \end{subfigure}%
    \begin{subfigure}{0.315\linewidth}
    \centering
    \includegraphics[width=1\linewidth]{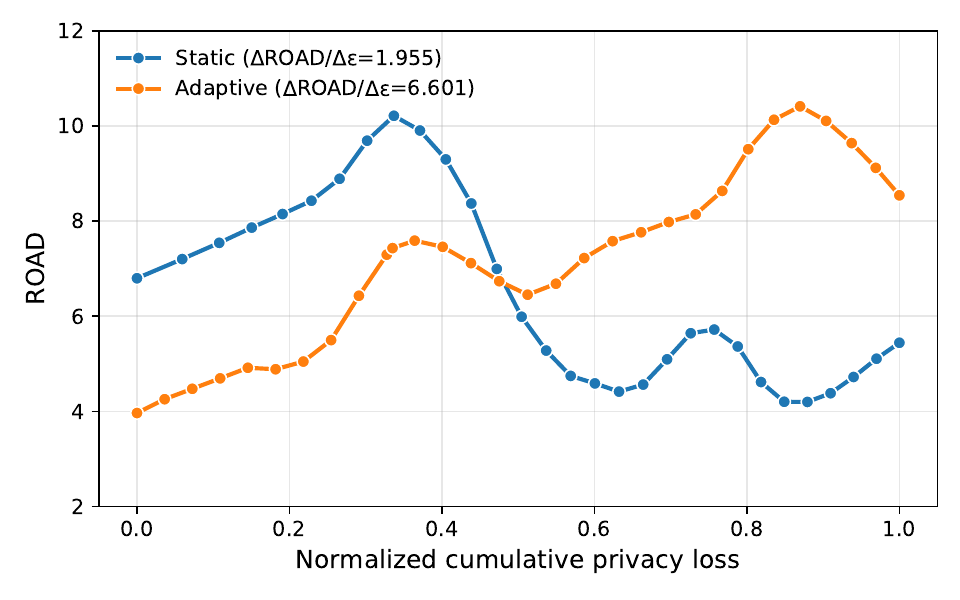}
    \subcaption{Chest X-rays}
    \label{fig:xr_priv_expl_efficiency}
    \end{subfigure}%
    \begin{subfigure}{0.310\linewidth}
    \centering
    \includegraphics[width=1\linewidth]{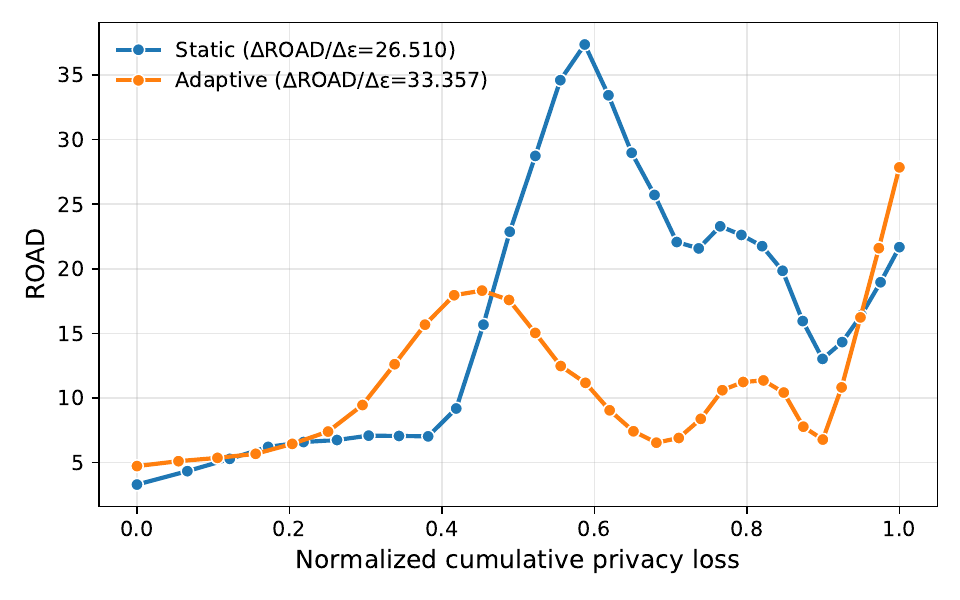}
    \subcaption{Melanoma}
    \label{fig:ml_priv_expl_efficiency}
    \end{subfigure}
    \caption{Privacy-explainability efficiency for global FL models across datasets, comparing FL+DP training with \textcolor{MidnightBlue}{static} DP noise and our \textcolor{BurntOrange}{adaptive} XCal-FL training ($\varepsilon = 0.5$ with $\alpha=\beta=\gamma=1$). ROAD curves were smoothed using LOWESS smoothing. Efficiency is defined as the ratio of ROAD score to the cumulative privacy loss.}
    \label{fig:priv_expl_efficiency}
    \Description[]{}
\end{figure*}

\subsection{Trade-off Analysis Between Privacy, Predictive Performance and Explainability}\label{sec:privacy_performance_fidelity_tradeoff}

In the following sections, we addressed \textbf{RQ2} and \textbf{RQ3} by measuring the privacy-utility and privacy-explainability efficiency as the gain in performance or explanation fidelity ($\Delta F1$ or $\Delta ROAD$) per unit increase in normalized cumulative privacy loss ($\Delta \varepsilon$).

\subsubsection{Privacy-Utility Efficiency}

Figure~\ref{fig:priv_efficiency} summarizes how well each model configuration converts privacy expenditure into predictive gains.
Across all datasets, the adaptive noise injection strategy consistently produced more favorable efficiency trends compared to the static baseline.
For example, in the blood cell classification task (Fig.~\ref{fig:bc_priv_f1_efficiency}), the adaptive model achieved a substantially larger utility gain, increasing the F1-score from $0.20$ to $0.72$ by the last training round.
This corresponds to a utility-to-privacy-loss ratio ($\Delta F1/\Delta\varepsilon$) of $0.223$ for the adaptive configuration compared with $0.178$ for the static one, confirming the higher privacy-utility efficiency achieved under adaptive noise calibration.
While both ratios are below 1, this is expected in FL with DP, where cumulative privacy loss typically grows faster than performance gains.

The skin melanoma classification task (Fig.~\ref{fig:ml_priv_f1_efficiency}) exhibited the strongest efficiency improvement with our adaptive model, achieving nearly double the efficiency of the static baseline by boosting F1 early in training, when privacy loss is still low ($\Delta F1/\Delta\varepsilon$ ratio of $0.209$ and $0.106$ for the adaptive and static training strategies, respectively).
Moreover, while the blood cell task showed a roughly linear efficiency trend, the pneumonia detection task did not follow a clear linear pattern, suggesting dataset-dependent dynamics in how privacy and performance interact.
Overall, these results demonstrate that adaptive noise injection can translate privacy loss into larger performance gains than static training.

\subsubsection{Privacy-Explainability Efficiency}

We observed a similar pattern when evaluating explainability efficiency (Fig.~\ref{fig:priv_expl_efficiency}).
Note that while the trend is not strictly monotonic across training rounds, the adaptive approach consistently achieved higher efficiency across all three tasks.
For example, in the blood cell classification task (Fig.~\ref{fig:bc_priv_expl_efficiency}), the adaptive model reached a local maximum ROAD score after using only 20\% of its normalized privacy budget, demonstrating that strong explanation quality can be achieved at relatively low privacy cost.
In contrast, the static baseline peaked only after consuming roughly 80\% of its privacy budget.
Numerically, this yielded a $\Delta ROAD/\Delta\varepsilon$ ratio of $82.12$ for the adaptive configuration, compared with only $10.57$ for the static one.
For pneumonia detection (Fig.~\ref{fig:xr_priv_expl_efficiency}), the static model reached its maximum ROAD score early in training and showed no further improvement.
In contrast, the adaptive model continued to increase explanation fidelity throughout training, resulting in a higher privacy-explainability ratio of $\Delta ROAD/\Delta\varepsilon=6.60$.
A similar pattern was observed in melanoma detection (Fig.~\ref{fig:ml_priv_expl_efficiency}), where the adaptive model achieved a steeper early rise in the ROAD score and higher efficiency ($\Delta \text{ROAD}/\Delta \varepsilon=33.35$), while the static model peaked after greater privacy expenditure (ratio of $26.51$).

The magnitude of the efficiency improvement varied across datasets.
The blood cell classification task exhibited the largest privacy-explainability efficiency gain (approximately eightfold), which we attribute to the spatially compact and well-localized nature of blood cell features.
In contrast, the melanoma task showed a more modest gain (1.3$\times$), consistent with the observation that dermoscopic features are more distributed across the image, leading to lower saliency concentration scores and a more conservative noise reduction.
The chest X-ray task showed an intermediate gain of 3.3$\times$ over static DP training, lower than the blood cell task, which we attribute to the fact that grayscale images produce less discriminative Grad-CAM maps, limiting the composite signal's dynamic range.
These dataset-dependent patterns suggest that the efficiency gains of adaptive noise calibration are strongest when the task admits spatially concentrated explanatory features.

\subsubsection{Comparing Performance and Explainability Dynamics}
A closer examination of the temporal dynamics in Figures~\ref{fig:priv_efficiency} and~\ref{fig:priv_expl_efficiency} reveals a structural difference between performance and explainability trajectories.
In the blood cell task (Fig.~\ref{fig:bc_priv_expl_efficiency}), the adaptive model reached a peak ROAD score after consuming approximately 20\% of its normalized privacy budget, after which explanation fidelity plateaued.
In contrast, the corresponding F1-score in Figure~\ref{fig:bc_priv_f1_efficiency} continued to increase steadily throughout training.
This decoupling suggests that the gradient components most relevant to explanation quality converge earlier than those driving predictive accuracy.
For the pneumonia detection task (Fig.~\ref{fig:xr_priv_expl_efficiency}), the static model's ROAD score peaked within the first 30\% of training and then stagnated, whereas the adaptive model sustained steady improvement throughout, indicating that explainability-aware noise calibration prevents the early saturation of explanation quality observed under static noise.
These patterns were consistent across datasets, with the timing of the ROAD peak varying by task but always preceding the point at which F1-score stabilized.

Taken together, the efficiency results reveal a fundamental asymmetry in how privacy loss affects predictive performance and explanation fidelity.
The F1-score curves in Figure~\ref{fig:priv_efficiency} follow approximately monotonic, near-linear trajectories across all three datasets, consistent with the standard assumption in privacy-utility analyses that more privacy budget expenditure yields proportionally more utility.
The ROAD curves in Figure~\ref{fig:priv_expl_efficiency}, however, exhibit qualitatively non-monotonic trajectories that reflect unstable behavior.
This asymmetry implies that the privacy-explainability trade-off cannot be inferred from the privacy-utility trade-off alone, and that optimizing one does not guarantee optimization of the other.
This temporal asymmetry also has a practical implication for privacy-sensitive deployments: if explanation fidelity is a primary objective, training can be terminated earlier than would be optimal for predictive performance alone, thereby saving privacy budget.
This demonstrates empirically that performance and explainability follow fundamentally different dynamics under cumulative differential privacy loss in FL, motivating the need for dedicated privacy-explainability evaluation frameworks.

\section{Ablation Study: Explainability Signal Weights} \label{sec:ablation_study}

To assess the individual and combined contributions of the three components within our explainability-guided noise calibration signal, we conducted an ablation study using seven different weight configurations.
These configurations, summarized in Table~\ref{tab:signal_configs}, consist of isolated component tests (where one of the components is active during training, while the others are ignored), balanced baselines (where all components are activated), relative importance comparisons, and concentration scaling variations.
For each configuration, we report predictive performance (macro F1 score) and explainability quality (ROAD score) across all datasets.

\begin{table*}
\centering
\caption{Configurations of explainability signal component weights used in the ablation study. The weights $\alpha$, $\beta$, and $\gamma$ correspond to the logit change, counterfactual margin, and saliency concentration components, respectively.}
\label{tab:signal_configs}
\begin{tabular}{lcccl}
\toprule
\textbf{Configuration} & \textbf{$\alpha$} & \textbf{$\beta$} & \textbf{$\gamma$} & \textbf{Description} \\
\midrule
\multicolumn{5}{l}{\textit{Isolated Components}} \\
Pure Logit Change & 1.0 & 0.0 & 0.0 & Only the logit change component is active \\
Pure Counterfactual Margin & 0.0 & 1.0 & 0.0 & Only the counterfactual margin component is active \\
\midrule
\multicolumn{5}{l}{\textit{Balanced Baseline}} \\
Balanced & 1.0 & 1.0 & 1.0 & All components equally weighted \\
\midrule
\multicolumn{5}{l}{\textit{Relative Importance}} \\
Logit Change-dominant & 2.0 & 1.0 & 1.0 & Logit change weighted twice as much as counterfactual margin \\
Counterfactual Margin-dominant & 1.0 & 2.0 & 1.0 & Counterfactual margin weighted twice as much as logit change \\
\midrule
\multicolumn{5}{l}{\textit{Concentration Scaling}} \\
Dampened Saliency Conc. & 1.0 & 1.0 & 0.5 & Balanced components with square-root saliency concentration \\
Amplified Saliency Conc. & 1.0 & 1.0 & 2.0 & Balanced components with squared saliency concentration \\
\bottomrule
\end{tabular}
\end{table*}

\begin{figure}[h]
    \centering
    \begin{subfigure}{1\linewidth}
    \centering
    \includegraphics[width=1\textwidth]{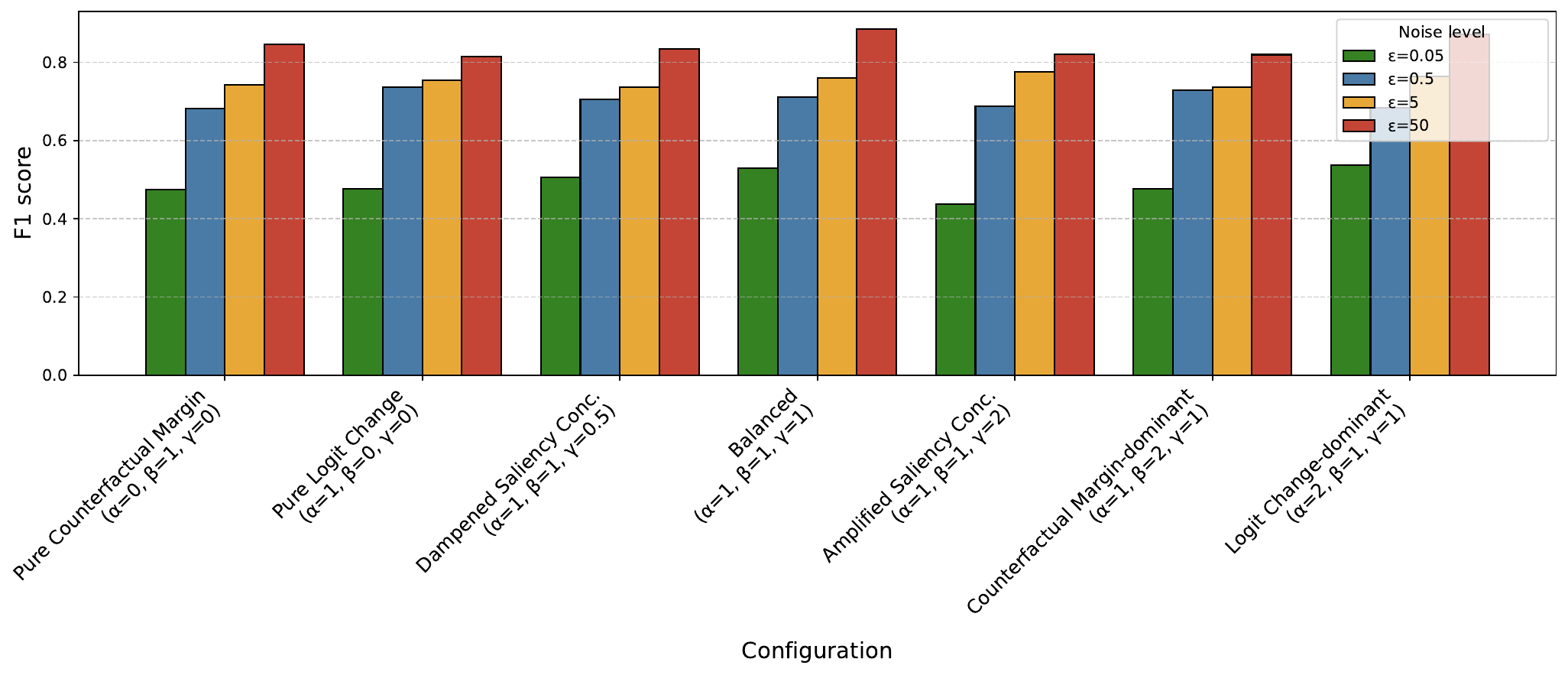}
    \subcaption{Performance (F1 score)}
    \label{fig:ml_road_conf_ablation}
    \end{subfigure}
    \begin{subfigure}{1\linewidth}
    \centering
    \includegraphics[width=1\textwidth]{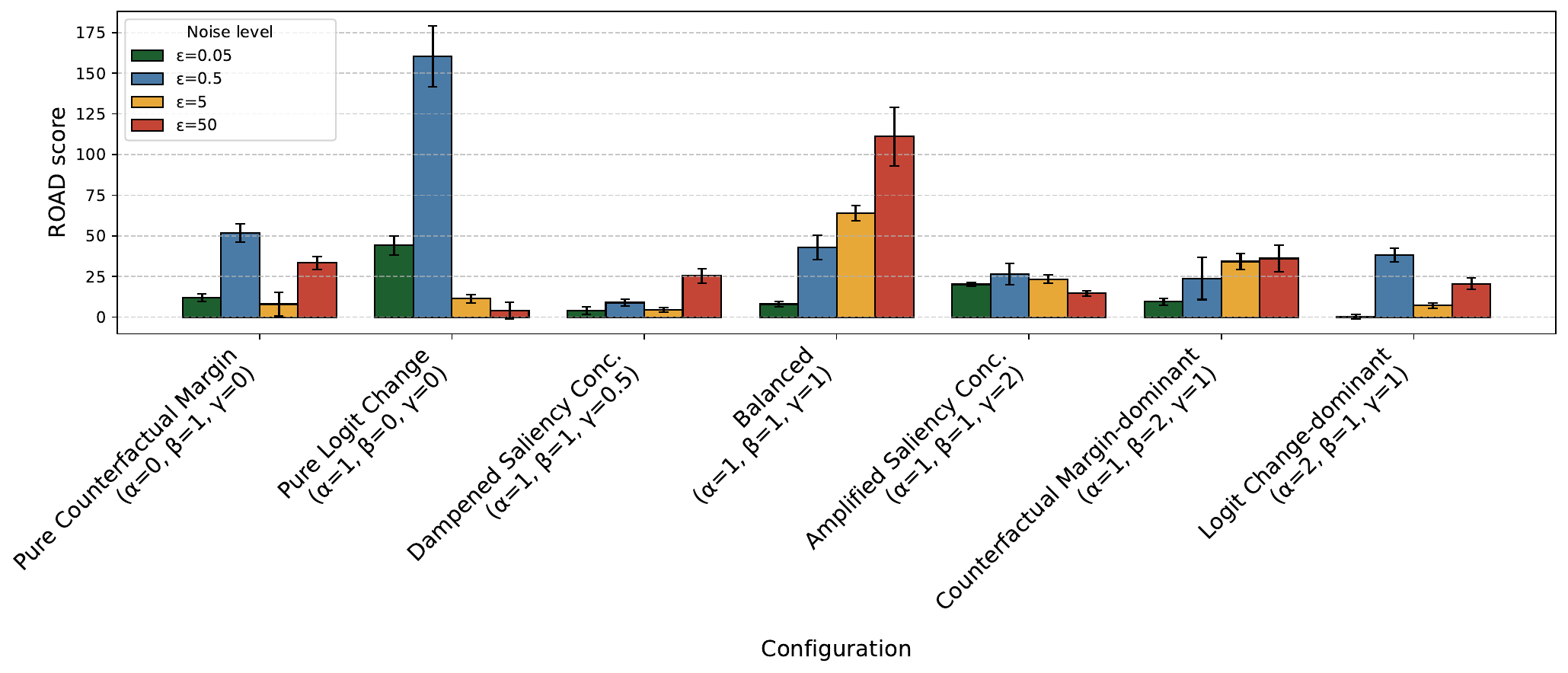}
    \subcaption{Explainability (ROAD score)}
    \label{fig:ml_f1_conf_ablation}
    \end{subfigure}
    \caption{Ablation analysis of the effect of signal component weights on predictive performance (F1 score) and explainability (ROAD score) across privacy guarantee levels ($\varepsilon$) for the Melanoma dataset. Each configuration varies the weights of the logit change ($\alpha$), counterfactual margin ($\beta$), and saliency concentration ($\gamma$) components.
    \label{fig:ml_f1_confs_ablation_analysis}}
    \Description[]{}
\end{figure}

\subsection{Predictive Performance}\label{sec:ablation_study_f1}

Figure~\ref{fig:ml_f1_conf_ablation} presents the F1 scores across all configurations and noise levels for the Melanoma dataset.
All configurations converged to comparable F1 scores ($0.75$--$0.88$) at relaxed privacy levels ($\varepsilon \geq 5$), indicating robustness to moderate variations in component weights.
Under stricter privacy ($\varepsilon \leq 0.5$), the \textit{Balanced} configuration ($\alpha=\beta=\gamma=1$) demonstrated the strongest performance, while the \textit{Amplified Saliency Concentration} variant ($\gamma=2$) showed degraded performance at $\varepsilon=0.05$, suggesting that overly aggressive concentration scaling may be detrimental under high-noise conditions.
We attribute the robustness of the \textit{Balanced} configuration to the complementary nature of the three signal components, which together provide a comprehensive characterization of each sample's explainability quality, enabling finer-grained noise calibration that preserves predictive performance even under strict privacy constraints.
Hence, relying on a single component, by contrast, captures only a partial view of the model’s reasoning, leading to suboptimal noise allocation and reduced utility.
Results for the remaining datasets, which exhibit consistent trends, are provided in Figure \ref{fig:f1_confs_ablation_analysis} in Appendix~\ref{sec:appendix_ablation_study}.

\subsection{Explainability}\label{sec:ablation_study_expl}

Figure~\ref{fig:ml_road_conf_ablation} presents the ROAD scores across configurations and noise levels for the Melanoma dataset.
Explainability differences across configurations were more pronounced than for predictive performance, highlighting the importance of component selection for explainability quality.
The \textit{Balanced} configuration achieved the highest ROAD scores at most noise levels, with particularly strong explainability at $\varepsilon=50$ (ROAD change of $110\%$).
Interestingly, the \textit{Pure Logit Change} configuration ($\alpha=1, \beta=0, \gamma=0$) showed competitive explainability at $\varepsilon=0.5$, but this did not generalize across other noise levels.
The concentration scaling variants show intermediate performance, while the \textit{Counterfactual Margin-dominant} configuration ($\alpha=1, \beta=2, \gamma=1$) exhibited lower and more variable ROAD scores.
These findings support our design choice of incorporating all three components ($\Delta\ell$, $m_{\mathrm{cf}}$, and $c$) rather than relying on a single signal.
We note that the optimal configuration in terms of its effect on explanation fidelity can be dataset-dependent, as discussed in Appendix~\ref{sec:appendix_ablation_study}.

\section{Execution Runtime Complexity Analysis}\label{sec:time_complexity_analysis}

We analyzed the execution runtime complexity of XCal-FL with respect to our used hardware (a machine with a 64GB of RAM and a NVIDIA's RTX 2080 Ti GPU) across all three datasets and our two evaluated model architectures.
A standard DP-SGD training step costs approximately $3F + O(P)$ per mini-batch, where $F$ is the cost of a single training-mode forward pass and $P$ is the total number of parameters \citep{abadi2016deeplearningwithdp,epoch2021backwardforwardFLOPratio}.
XCal-FL introduces a modest overhead by adding an explainability-driven signal computation per mini-batch, which consists of two inference-mode forward passes (one on the original input and one on the masked input) and subsequent saliency mask operations.
The privacy filter adds one RDP accountant query per mini-batch, whose cost is negligible relative to the forward passes.
Empirically, across our evaluated architectures and datasets, this translates to a small absolute per-round increase in wall-clock time for each client compared to standard DP-SGD (typically 1--3 seconds).
Crucially, this computational overhead is purely local and does not increase the payload size or communication costs, which remain the primary bottleneck in cross-silo FL environments.
Given the simultaneous gains in accuracy and explainability, this moderate computational overhead represents a favorable trade-off for privacy-sensitive applications, such as clinical imaging.
A more detailed wall-clock benchmarks across all configurations are provided in Appendix \ref{sec:appendix_time_complexity_analysis}.

\section{Discussion and Limitations}
\label{sec:discussion}

This work introduced XCal-FL, an adaptive noise control mechanism for DP-FL that calibrates noise using a composite signal combining explainability- and decision-related measures.
The three signal components offer complementary contributions: the logit change preserves learning from informative salient regions, the counterfactual margin sharpens decision boundaries, and the saliency concentration favors compact attention patterns.
Together, these components enable finer-grained noise calibration than any single component alone, as confirmed by our ablation study (Section~\ref{sec:ablation_study}), where the balanced configuration ($\alpha = \beta = \gamma = 1$) consistently achieved the strongest or most robust results across datasets and privacy levels.
This approach is motivated by prior findings that DP mechanisms with similar nominal privacy guarantees can yield different downstream behavior~\citep{kaissis2024beyond, ezzeddine2024differential}, and extends recent work on explainability-aware adaptive noise allocation in FL~\citep{li2023balancing}.

In terms of privacy preservation, the calculated noise multiplier $\sigma_{\text{step}}$ in our proposed method is always bounded within the symmetric band $[\sigma_{\min}, \sigma_{\max}]$, and the formal ($\varepsilon$, $\delta$)-DP guarantee is enforced by a R{\'e}nyi accountant \cite{wang2019subsampled} that charges the accountant the realized cost of every step and prevents any update that would exceed the allocated budget (Appendix \ref{sec:appendix_privacy_guarantee}).
XCal-FL therefore provides exactly the same formal guarantee as standard DP-SGD calibrated to the same ($\varepsilon$, $\delta$), while redistributing the privacy expenditure across training steps.
Our adaptive noise calibration redistributes noise across training steps more efficiently than in previous methods, such as the AGP \citep{li2023balancing}, translating a given privacy budget into larger gains in both utility and explainability (Figures \ref{fig:priv_efficiency} and \ref{fig:priv_expl_efficiency}).
These results align with prior findings that careful noise handling can improve the privacy-utility trade-off in DP-FL~\citep{li2024clients}, and extend them to the privacy-explainability dimension.
XCal-FL also consistently outperformed AGP~\citep{li2023balancing}, yielding over 10\% higher predictive performance and more than 75\% improvements in explanation fidelity on average, demonstrating that dynamic noise magnitude calibration provides advantages beyond static channel-level noise suppression.
Our empirical results further validate the signal-dependent regularization interpretation introduced in Section~\ref{subsec:noise_calibration_regularizer}: when the explainability signal is high, reduced noise preserves informative gradient components that reinforce causally meaningful representations, i.e.,  when the signal is low, higher noise acts as regularization that prevents overfitting to spurious or diffuse patterns.
This mechanism functions as an automatic mechanism that tightens regularization on poorly explained samples and loosens it on well-explained ones.

We further observed that while predictive performance improved approximately monotonically with increasing privacy loss, explainability exhibited non-linear dynamics, with ROAD scores peaking early in training before stabilizing (Fig.~\ref{fig:priv_expl_efficiency}) for some configurations.
This behavior is consistent with the known sensitivity of gradient-based attributions to perturbations~\citep{zaher2024manifold} and with theoretical findings that gradient-based saliency maps are inherently unstable under training stochasticity~\citep{ye2025gaussian}, an effect that DP-SGD amplifies through its per-step noise injection.
This finding carries broader implications: standard privacy-utility analyses are insufficient to capture the full impact of DP on model behavior, and explainability requires dedicated evaluation as a distinct dimension of the privacy trade-off.

\paragraph{Applicability Limitations}
XCal-FL is designed for image classification and is applicable to privacy-sensitive computer vision settings beyond clinical imaging, such as autonomous driving, wherever Grad-CAM-based attribution is applicable.
However, the explainability signal is tied to Grad-CAM specifically, so substituting a different attribution method may alter the signal characteristics and require re-tuning the composite score.
More broadly, testing applicability to non-vision domains, where gradient-based saliency maps are not directly available, remains a future research direction.

Beyond the task domain, our cross-silo evaluations across $N = 3, 10, 50$ clients (Figure~\ref{fig:per_clients_per_iid}) show moderate and sometimes inconsistent scaling gains, particularly under covariate shift, suggesting the method is best suited to cross-silo deployments and may require adaptation for large-scale cross-device settings.
Moreover, the two additional inference-mode forward passes per mini-batch incur up to 80\% overhead relative to standard DP-SGD (Section~\ref{sec:time_complexity_analysis}), which, while acceptable in cross-silo settings with sufficient compute, may be challenging for resource-constrained clients.
In terms of privacy granularity, our mechanism provides record-level $(\varepsilon, \delta)$-DP, protecting individual training images.
Since a single patient may contribute multiple images, however, the cumulative information leakage across a patient's records is not formally bounded at the patient level.
Providing user-level DP guarantees would require additional mechanisms such as per-patient gradient grouping and is left as future work.
In addition, while the balanced weight configuration generalizes well across our evaluated datasets, optimal weights can be dataset-dependent (e.g., dampened saliency concentration proved superior for Chest X-rays), requiring empirical tuning during real-world deployment.
Finally, a further consequence of the filter-based design \cite{feldman2021individual} is that a client may exhaust its gradient budget before completing all $T$ rounds if the signal keeps the noise persistently below $\sigma_{ref}$, leading to an early halt of noisy updates.
This behavior makes the total privacy expenditure identical to the static baseline by construction, at the cost of a possibly shorter effective training horizon.
An investigation of other DP accountant is left to future work.

\section{Conclusion}
\label{sec:conclusion}

This paper introduced XCal-FL, an explainability-driven training approach with adaptive Differential Privacy for Federated Learning that calibrates noise based on explanation signals to balance privacy, utility, and explanation fidelity.
Experiments on medical imaging tasks show that XCal-FL consistently outperforms fixed-noise DP-SGD, achieving over 10\% higher performance and more than 75\% higher fidelity of attribution-based explanations than state-of-the-art methods for adaptive DP in FL.
In addition, our approach achieves up to 25\% higher privacy-utility efficiency and up to threefold higher privacy-explainability efficiency, allowing trained FL models to translate the incurred privacy loss into larger gains in both predictive performance and explanation fidelity than standard DP-SGD training with static DP noise.
This supports interpretable and trustworthy decision-making in safety-critical decision-support applications.


\appendix

\begin{acks}

\end{acks}

\begin{ethics}

This research relies exclusively on publicly available datasets obtained from official data releases \citep{bccd_dataset,wang2017chestx,isic2020}.
Although these datasets originate from the medical domain, they were released in anonymized form by their respective providers, and no re-identification was attempted in our experimental setting.
No personally identifiable information was collected, processed, or stored at any stage of this work.
We have complied with the terms of use specified by each data provider (CC-BY-NC for the ISIC Melanoma dataset \citep{isic2020}; MIT license for the Blood Cells dataset \citep{bccd_dataset}; CC0 1.0 Universal license for the NIH Chest X-ray dataset \citep{wang2017chestx}).
The research poses minimal risk, as it involves only secondary analysis of existing public data and does not involve human participants, or interaction with vulnerable populations.
Therefore, ethical approval from an IRB was not required.

\end{ethics}

\begin{openscience}

An anonymous repository containing our code is available at~\url{https://osf.io/xr9nt/overview?view_only=b0fd7b2b88a94578abd06dacb5d19fad}.
The repository includes the implementation of the proposed XCal-FL training procedure, covering both client-side training (implementing explainability-driven FL training with differentially private noise calibration) and server-side aggregation, the two evaluated model architectures, and configuration files for reproducing the reported experiments.
The datasets used in this work are publicly available from their respective providers and are therefore not redistributed in the repository.
\end{openscience}

\begin{ai}

We used generative AI-based tools to revise the text, improve flow, and correct typos, grammatical errors, and awkward phrasing. We have manually verified the text and are responsible for the accuracy, originality, and integrity of it.

\end{ai}

\bibliographystyle{ACM-Reference-Format}
\bibliography{reference}

\section*{Appendix}
\section{FL with Adaptive Explainability-Driven Client Training (Server Side)}
\label{sec:appendix_server_side_fedavg_pseudo}

Algorithm~\ref{alg:expl_driven_server} outlines the server-side procedure of our explainability-driven DP-FL framework, utilizing the notation defined in Table~\ref{tab:hyperparams}, building upon the standard Federated Averaging (FedAvg) protocol \citep{mcmahan2017communication}.
The server is parameterized by three types of parameters, as follows.
\begin{itemize}
    \item \textbf{FL parameters} including the number of global rounds $T$ and the client set $\mathcal{M}$ with $|\mathcal{M}| = N$.
    \item \textbf{Local training parameters} $\Phi_{\text{tr}}$, including the client models' learning rate $\eta$ and the batch size $B$.
    \item \textbf{Explainability parameters} $\Phi_{\text{expl}}$, defined by the proposed method in Section~\ref{sec:key_method}.
    \item \textbf{Differential Privacy parameters} $\Phi_{\text{dp}}$, consisting of the clipping norm $C$, the reference multiplier $\sigma_{ref}$ and the band factor $b$ defining the noise range $[\sigma_{\min}, \sigma_{\max}]$ and the failure probability $\delta$, which quantifies the relaxation of $\varepsilon$-DP and is fixed to a small constant in accordance with standard DP practice \citep{ponomareva2023dp}.
\end{itemize}

At each round $t$, the server samples a subset of $m$ clients $S_t \subset \mathcal{M}$ and distributes the current global model $\mathbf{w}^{t-1}$ to sampled clients.
Upon receiving the model, each client $i \in S_t$ executes the adaptive explainability-driven training locally (based on DP-SGD algorithm \citep{mcmahan2017communication}) described in Section \ref{sec:design}.
We collect the locally trained (and differentially private) model parameters $\mathbf{w}_i^{t}$, and the client's cumulative privacy loss $\varepsilon_{\text{cum}}$ for privacy accounting.
The server then performs standard weighted aggregation to derive the aggregated model parameters $\mathbf{w}^{t}$ and broadcasts this updated global model to all clients for the subsequent round.

\begin{algorithm}[tbh]
\caption{FL with Adaptive Explainability-Driven Client Training (Server Side)}
\label{alg:expl_driven_server}
\KwIn{FL parameters: global rounds $T$; client set $\mathcal{M}$;
Training parameters $\Phi_{\text{tr}} = (\eta,\ B)$;
DP parameters $\Phi_{\text{dp}} = (C, \sigma_{ref},b, \delta,\rho)$;
Explainability parameters $\Phi_{\text{expl}} = (q,\ \alpha,\ \beta,\ \gamma,\ \tau)$}
\KwOut{$\mathbf{w}^{T}$ final global model parameters}
Initialize global model $\mathbf{w}^{0}$\;
\For{$t = 1 \to T$}{
  Sample a subset of clients $S_t \subset \mathcal{M}$ ($|S_t| = m$)\;
  \ForAll{client $i \in S_t$ with $n_i$ samples}{
    $\mathbf{w}_i^{t} \gets \textsc{XCal-Client}\!\bigl(\mathbf{w}^{t-1},\ \mathcal{D}_i,\ \Phi_{\text{tr}}, \Phi_{\text{expl}},\ \Phi_{\text{dp}}\bigr)$\;
  }
  $\mathbf{w}^{t} \gets \frac{1}{\sum_{i \in S_t} n_i}\sum_{i \in S_t} n_i\, \mathbf{w}_i^{t}$ \tcp*{Aggregation}
  Broadcast $\mathbf{w}^{t}$ to all clients\;
}
\end{algorithm}

\section{Privacy Guarantee Under Adaptive Noise Calibration}
\label{sec:appendix_privacy_guarantee}

We demonstrate that XCal-FL satisfies the target $(\varepsilon, \delta)$-DP guarantee under fully adaptive composition~\cite{whitehouse2023fully}, using a privacy filter~\cite{feldman2021individual} to enforce the gradient budget.
The overall budget is partitioned into two components: a budget of $(\varepsilon_s, \delta_s)$ allocated to the computation of the explainability signals across all rounds, and a budget of $(\varepsilon_g, \delta_g)$ allocated to the gradient perturbations, so that $(\varepsilon, \delta) = (\varepsilon_s + \varepsilon_g,\, \delta_s + \delta_g)$.

The reference multiplier $\sigma_{\mathrm{ref}}$ is computed via binary search over the R\'enyi Differential Privacy (RDP) accountant~\cite{mironov2017renyi, wang2019subsampled} as the constant value that exactly satisfies $(\varepsilon_g, \delta_g)$-DP over $T$ training steps at batch size $B$, and the adaptive multiplier is selected from the symmetric band $[\sigma_{\min}, \sigma_{\max}] = [(1-b)\,\sigma_{\mathrm{ref}},\, (1+b)\,\sigma_{\mathrm{ref}}]$ with $0 < b < 1$.
Because $\sigma_{\mathrm{step}}$ may fall below $\sigma_{\mathrm{ref}}$, adherence to the gradient budget is not implied by the band itself.
Instead, each client runs a privacy filter: before each noisy update, the accountant computes the cumulative privacy cost of all gradient releases so far, including one additional step at the candidate $\sigma_{\mathrm{step}}$, and the step is admitted only if this cost remains within $(\varepsilon_g, \delta_g)$; otherwise, $\sigma_{\mathrm{step}}$ is clamped upward to the smallest admissible value in $[\sigma_{\mathrm{step}}, \sigma_{\max}]$, and if no value in the band is admissible, the client halts noisy updates for the remainder of training.
Throughout, privacy costs are tracked under the subsampling assumption discussed in Section~\ref{sec:training_procedure}, which applies identically to XCal-FL and to the static baseline and is orthogonal to the adaptive calibration analyzed here~\cite{ponomareva2023dp}.

We establish the guarantee with respect to the \emph{adversary's view} of the client's training process, namely the full sequence of released outputs: the privatized explainability signals and the noisy gradient updates.
Any quantity available to a party in our threat model, including the local parameters $\mathbf{w}_i^t$ received by the server and the realized multipliers $\sigma_{\mathrm{step}}$ at iteration $t$, is obtained by post-processing this sequence, so by the post-processing property of DP~\cite{dwork2014algorithmic} the guarantee extends to all of them.
 
\begin{proposition}
\label{prop:privacy}
Let the sequence of explainability signals $\tilde{s}$ be computed via an $(\varepsilon_s, \delta_s)$-DP mechanism, and let the gradient noise multipliers$\sigma_{\mathrm{step}} \in [\sigma_{\min}, \sigma_{\max}]$ be selected as a post-processing of $\tilde{s}$, subject to the privacy filter with budget $(\varepsilon_g, \delta_g)$.
Then XCal-FL satisfies an overall $(\varepsilon_s + \varepsilon_g,\, \delta_s + \delta_g)$-DP guarantee.
\end{proposition}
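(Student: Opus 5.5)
The plan is to view the client's training as an interleaved, fully adaptive sequence of mechanisms: at step $j$ it releases a privatized signal $\tilde{s}_j$ and then a noisy gradient $\tilde{g}_j$. The adversary's view is the transcript $(\tilde{s}_1,\tilde{g}_1,\tilde{s}_2,\tilde{g}_2,\dots)$, and everything else ($\mathbf{w}_i^t$, the realized $\sigma_{\mathrm{step}}$, the aggregated model) is post-processing of it. I would split this transcript into two interleaved sub-sequences, the signal releases and the gradient releases. Each is bounded separately: the signals by $(\varepsilon_s,\delta_s)$ and the gradients by $(\varepsilon_g,\delta_g)$. Basic adaptive composition of the two streams then gives $(\varepsilon_s+\varepsilon_g,\delta_s+\delta_g)$. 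Because the gradient stream's parameters depend on earlier outputs, the composition step must be the fully adaptive version of~\cite{whitehouse2023fully}, not the fixed-parameter theorem.

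\textbf{Signal stream.} At each step the raw score is clipped to $[0,1]$, so its sensitivity is at most $1$. Adding $\mathcal{N}(0,\sigma_s^2)$ is therefore a Gaussian mechanism, and the clip of the noisy value afterwards is post-processing. I would also cover the case where the per-step signal depends on the current weights $\mathbf{w}_i^t$, which are themselves outputs of earlier releases. Since $\sigma_s$ is a fixed constant, the signal stream is a non-adaptive-in-parameters Gaussian composition with a fixed total RDP cost. Choosing $\sigma_s$ so that this total converts to $(\varepsilon_s,\delta_s)$ gives the hypothesis of the proposition directly. Smoothing and the map $s\mapsto\sigma_{\mathrm{step}}$ are deterministic functions of past $\tilde{s}$, so they are post-processing.

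\textbf{Gradient stream.} This is the step I expect to be the main obstacle. Each gradient release is a subsampled Gaussian mechanism with multiplier $\sigma_{\mathrm{step},j}$. That multiplier is chosen adaptively from the history through $\tilde{s}$ and through the filter's accountant state, and both are determined by past outputs. The per-step RDP curve $\epsilon_j(\lambda)$ is therefore a predictable random variable, so standard RDP composition with fixed parameters does not apply. I would invoke the privacy-filter theorem for RDP~\cite{feldman2021individual}. It states that if, at every step, the cumulative RDP $\sum_{j}\epsilon_j(\lambda)$ (including the candidate step) stays within a fixed budget curve, and otherwise the mechanism halts, then the whole adaptive sequence satisfies that budget. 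The filter in Algorithm~\ref{alg:expl_driven_client} enforces exactly this: it admits a step only if the accumulated cost still converts to $(\varepsilon_g,\delta_g)$, and the monotonicity of the per-step RDP cost in $\sigma$ makes the "smallest admissible $\sigma$" well defined. The subtlety is that the filter must be stated against one fixed RDP order, or one fixed budget curve, rather than the post-hoc optimized conversion over orders. I would handle this by fixing the target budget curve in advance, e.g. the RDP curve of $T$ steps at $\sigma_{\mathrm{ref}}$, and checking the filter order-wise. Alternatively, I would fix a single order $\lambda^*$ and apply the RDP-to-$(\varepsilon,\delta)$ conversion once at the end.

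\textbf{Assembly.} Finally I would compose the two streams adaptively: the signal stream at $(\varepsilon_s,\delta_s)$ and the filtered gradient stream at $(\varepsilon_g,\delta_g)$, interleaved. This is the fully adaptive composition of~\cite{whitehouse2023fully} with two budgets fixed in advance. Adding the budgets gives the claim, and post-processing extends it to the released $\mathbf{w}_i^t$. The whole argument inherits the Poisson-subsampling caveat noted in Section~\ref{sec:training_procedure}, and I would state that assumption explicitly.
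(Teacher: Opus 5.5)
Your proposal is correct at the paper's level of rigor and follows its proof essentially step for step: a Gaussian mechanism on the clipped signal with $\sigma_s$ fixed over all $T$ releases, post-processing to obtain $\sigma_{\mathrm{step}}$, a Feldman--Zrnic privacy filter for the adaptively scaled gradient stream, and composition of the $(\varepsilon_s,\delta_s)$ and $(\varepsilon_g,\delta_g)$ budgets. Your remarks on fixing the filter's RDP order or budget curve in advance, and on the two streams being interleaved, sharpen points the paper's Steps 3--4 pass over (it checks the order-optimized conversion and cites plain sequential composition); like the paper, though, you still treat each gradient step, conditioned on the history, as a fresh subsampled Gaussian even though $\tilde{s}_j$ is computed on the same mini-batch and so leaks which records that batch contains.
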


\begin{proof}
\textbf{Step 1 (Signal Privacy).}
The raw explainability signal $s_{\mathrm{raw}}$ is clipped to $[0, 1]$, which bounds its sensitivity to any single training record by $1$. The mechanism $M_1$ adds Gaussian noise $\mathcal{N}(0, \sigma_s^2)$ to the clipped signal. With $\sigma_s$ calibrated by the accountant to the composition of all $T$ signal releases, the sequence of privatized signals $\tilde{s}$ satisfies the allocated $(\varepsilon_s, \delta_s)$-DP budget.

\textbf{Step 2 (Post-Processing and Adaptivity).}
The step-specific multiplier $\sigma_{\mathrm{step}}$ is a deterministic function of the privatized signal $\tilde{s}$, the public band $[\sigma_{\min}, \sigma_{\max}]$, and the state of the privacy filter, which is itself a function of previously released private outputs. By the post-processing property of DP~\cite{dwork2014algorithmic}, computing $\sigma_{\mathrm{step}}$ incurs no additional privacy loss. Moreover,since $\sigma_{\mathrm{step}}$ depends on the training data only through previously released differentially private outputs, its use as the noise parameter of the subsequent gradient mechanism is admissible under adaptive composition~\cite{whitehouse2023fully, feldman2021individual}.

\textbf{Step 3 (Filter Enforcement of the Gradient Budget).}
Let $M_2$ denote the gradient-release mechanism, namely the composition of all noisy gradient updates $\tilde{g}_t$ produced during training, where the $t$-th update is computed with the adaptively chosen multiplier $\sigma_{\mathrm{step}, t}$. Conditioned on all previously released outputs, each admitted update of $M_2$ is a standard DP-SGD step, i.e., a subsampled Gaussian mechanism with a known multiplier $\sigma_{\mathrm{step}}$, whose privacy cost is computed by the accountant~\cite{abadi2016deeplearningwithdp,mironov2017renyi}. Because the multipliers are chosen adaptively, composition theorems for parameters fixed in advance do not directly apply. The theory of privacy filters~\cite{feldman2021individual} establishes that enforcing a fixed budget through the admission, clamping, and halting rule described above yields a valid privacy guarantee for $M_2$: by construction, the realized cumulative privacy cost of all gradient releases never exceeds $\varepsilon_g$ at failure probability $\delta_g$. Such filters have been instantiated for DP-SGD with adaptively chosen noise~\cite{feldman2021individual}, and this enforcement incurs no loss relative to composition with parameters fixed in advance~\cite{whitehouse2023fully}. Note that the guarantee does not depend on the lower band limit $\sigma_{\min}$: the band $[\sigma_{\min}, \sigma_{\max}]$ is a design parameter governing the calibration range, while the budget is enforced solely by the filter. In particular, a client whose signal keeps the noise persistently below $\sigma_{\mathrm{ref}}$ may exhaust its gradient budget before completing all $T$ steps, at which point noisy updates halt; the total privacy expenditure therefore never exceeds that of the static baseline, while the effective training horizon may be shorter.

\textbf{Step 4 (Sequential Composition).}
By the sequential composition property of DP~\cite{dwork2014algorithmic}, composing the $(\varepsilon_s, \delta_s)$-DP signal mechanism $M_1$ with the filtered, adaptively scaled $(\varepsilon_g, \delta_g)$-DP gradient mechanism $M_2$ yields a total privacy loss bounded by $(\varepsilon_s + \varepsilon_g,\, \delta_s + \delta_g)$. Hence, XCal-FL satisfies the target $(\varepsilon, \delta)$-DP guarantee.
\end{proof}

\section{Description of Medical Imaging Datasets}
\label{sec:appendix_eval_datasets_desc}

Table \ref{tab:datasets} lists the datasets that were used for evaluation of our method. The datasets are representative of medical imaging tasks prevalent in real-world clinical use-cases, such as pneumonia detection using chest X-ray scans or melanoma detection using skin lesions scans.

\begin{table}[hb]
\begin{center}
\begin{small}
\caption{Evaluation datasets from the healthcare domain}
\label{tab:datasets}
\begin{tabular}{>{\raggedright\arraybackslash}p{0.29\linewidth}>{\centering\arraybackslash}p{0.13\linewidth}>{\centering\arraybackslash}p{0.1\linewidth}>{\centering\arraybackslash}p{0.1\linewidth}>{\centering\arraybackslash}p{0.18\linewidth}}
\toprule
\textbf{Dataset}& \textbf{Image size}&\textbf{Train size} &\textbf{Test size}&\textbf{Classes}\\
\midrule
Blood cells \cite{bccd_dataset}& 3$\times$150$\times$150& 9,996& 2,487 &Eosinophil; Lymphocyte; Monocyte; Neutrophil\\
NIH Chest X-ray \cite{wang2017chestx}& 3$\times$128$\times$128&78,468&22,433&Normal; Pneumonia\\
ISIC Melanoma \cite{isic2020}& 3$\times$64$\times$64&9,605&1,000&Benign; Malignant\\
\bottomrule
\end{tabular}
\end{small}
\end{center}
\end{table}

\section{Additional Results: Performance and Explainability Across Datasets}\label{sec:additional_per_num_client_iid_results}

Tables~\ref{tab:f1_all} and~\ref{tab:road_all} present the F1 scores and ROAD explanation fidelity scores, respectively, for all evaluated configurations across three medical imaging datasets, two model architectures, and varying numbers of FL clients under IID and non-IID data distributions.

\begin{table*}[ht]
\centering
\caption{F1 scores of global models ($\varepsilon{=}0.5$) across three
  medical imaging datasets for varying FL client numbers, model architectures and client data distribution patterns. XCal-FL models were trained with a balanced XCal-FL configuration ($\alpha=\beta=\gamma=1$).
  \textbf{Bold} marks the higher (better) score between the two DP methods (FL+DP and XCal-FL) per configuration.
  Centralized (non-FL) upper bounds ---
  Melanoma: EfficientNet-B0\,=\,.896, ResNet-18\,=\,.911;
  BloodCells: EfficientNet-B0\,=\,.862, ResNet-18\,=\,.855;
  Chest X-rays: EfficientNet-B0\,=\,.680, ResNet-18\,=\,.647.
}
\label{tab:f1_all}
\footnotesize
\setlength{\tabcolsep}{4.5pt}
\begin{tabular}{@{} ll c ccc ccc ccc @{}}
\toprule
 & & & \multicolumn{3}{c}{\textbf{Melanoma}}&  \multicolumn{3}{c}{\textbf{BloodCells}}&  \multicolumn{3}{c}{\textbf{ChestXray}}\\
\cmidrule(lr){4-6} \cmidrule(lr){7-9} \cmidrule(lr){10-12}
\textbf{Setting} & \textbf{Model} & $\boldsymbol{N}$& FL&FL+DP& XCal-FL&  FL&FL+DP& XCal-FL&  FL&FL+DP& XCal-FL\\
\midrule

  \multirow{6}{*}{IID}
  & EfficientNet-B0&  3  & .777& .693& \textbf{.712}&  .844& .654& \textbf{.716}&   .669& .529&  \textbf{.600}\\
  
  & EfficientNet-B0& 10  & .830&.720 & \textbf{.809}&  .876&.745 & \textbf{.815}&  .634&.535 & \textbf{.689}\\
  
  & EfficientNet-B0& 50  & .818&.670 & \textbf{.822}&  .840&.689 & \textbf{.756}&  .618&.502 & \textbf{.662}\\
  \cmidrule(lr){2-12}
  
  & ResNet-18&  3  & .862&.667& \textbf{.811}&  .861&.740 & \textbf{.809}&  .639&.541 & \textbf{.555}\\
  
  & ResNet-18& 10  & .870&\textbf{.847}& .818&  .862&.733 & \textbf{.802}&  .631&.529 & \textbf{.583} \\
  
  & ResNet-18& 50  & .872&.849& \textbf{.889}&  .835&.685 & \textbf{.751}&  .612&.496 & \textbf{.597}\\

\midrule

  \multirow{6}{*}{\shortstack[l]{Label\\Shift}}
  & EfficientNet-B0&  3  & .783&\textbf{.741}& .650&  .864&.685 & \textbf{.811}&  .638&.538 & \textbf{.545}\\
  
  & EfficientNet-B0& 10  & .796&.685 & \textbf{.736}&  .882&.679 & \textbf{.811}&  .625&.519 & \textbf{.574} \\
  
  & EfficientNet-B0& 50  & .809&.667 & \textbf{.752}&  .845&.625 & \textbf{.752}&  .601&.487 & \textbf{.593}\\
  \cmidrule(lr){2-12}
  
  & ResNet-18&  3  & .886&\textbf{.832}& .705&  .841&.664 & \textbf{.732}&  .633&.531 & \textbf{.538}\\
  
  & ResNet-18& 10  & .872&.774 & \textbf{.807}&  .843&.658 & \textbf{.784}&  .619&.512 & \textbf{.568} \\
  
  & ResNet-18& 50  & .855&.667 & \textbf{.879}&  .822&.616 & \textbf{.791}&  .595&.479 & \textbf{.586}\\

\midrule

  \multirow{6}{*}{\shortstack[l]{Cov.\\Shift}}
  & EfficientNet-B0&  3  & .285&\textbf{.335}& .319&  .876&.733 & \textbf{.750}&  .596&.481 & \textbf{.492}\\
  
  & EfficientNet-B0& 10  & .655&\textbf{.520} & .507&  .883&.724 & \textbf{.804}&  .578&.458 & \textbf{.521} \\
  
  & EfficientNet-B0& 50  & .510&.251 & \textbf{.537}&  .852&.673 & \textbf{.809}&  .553&.427 & \textbf{.543}\\
  \cmidrule(lr){2-12}
  
  & ResNet-18&  3  & .668&\textbf{.661} & .351&  .838&.718 & \textbf{.737}&  .589&.474 & \textbf{.485}\\
  
  & ResNet-18& 10  & .672&\textbf{.667}& .667&  .857&.703 & \textbf{.780}&  .571&.451 & \textbf{.514} \\
  
  & ResNet-18& 50  & .670&\textbf{.667}& .667&  .855&.654 & \textbf{.796}&  .546&.421 & \textbf{.526}\\

\bottomrule
\end{tabular}
\end{table*}

\begin{table*}[ht]
\centering
\caption{%
  ROAD scores (explanation fidelity) of global models ($\varepsilon{=}0.5$) across three
  medical imaging datasets for varying FL client numbers, model architectures and client data distribution patterns.
  XCal-FL models were trained with a balanced XCal-FL configuration ($\alpha=\beta=\gamma=1$).
  \textbf{Bold} marks the higher (better) score between the two DP methods (FL+DP and XCal-FL) per configuration.
  Centralized (non-FL) upper bounds ---
  Melanoma: EfficientNet-B0\,=\,64.819, ResNet-18\,=\,66.830;
  BloodCells: EfficientNet-B0\,=\,202.955, ResNet-18\,=\,200.200;
  Chest X-rays: EfficientNet-B0\,=\,407.588, ResNet-18\,=\,266.563.
}
\label{tab:road_all}
\footnotesize
\setlength{\tabcolsep}{4.5pt}
\begin{tabular}{@{} ll c ccc ccc ccc @{}}
\toprule
 & & &  \multicolumn{3}{c}{\textbf{Melanoma}}&  \multicolumn{3}{c}{\textbf{BloodCells}}&  \multicolumn{3}{c}{\textbf{ChestXray}}\\
\cmidrule(lr){4-6} \cmidrule(lr){7-9} \cmidrule(lr){10-12}
\textbf{Setting} & \textbf{Model} & $\boldsymbol{N}$&  FL&FL+DP& XCal-FL&  FL&FL+DP& XCal-FL&  FL&FL+DP& XCal-FL\\
\midrule
  \multirow{6}{*}{IID}
  & EfficientNet-B0&  3  &  24.921&31.272& \textbf{42.661}&  283.255&15.561& \textbf{80.568}&  187.114&5.742& \textbf{7.229}\\
  
  & EfficientNet-B0& 10  &  26.148&63.584& \textbf{67.215}&  271.842&18.274& \textbf{75.218}&  192.418&6.184& \textbf{6.842}\\
  
  & EfficientNet-B0& 50  &  23.572&69.813& \textbf{70.011}&  258.415&12.584& \textbf{82.415}&  178.215&4.518& \textbf{7.584}\\
  \cmidrule(lr){2-12}
  
  & ResNet-18&  3  &  27.835&\textbf{64.192}& 61.883&  268.418&17.842& \textbf{88.215}&  168.842&6.415& \textbf{8.218}\\
  
  & ResNet-18& 10  &  25.416&\textbf{62.748}& 62.215&  279.584&14.218& \textbf{78.842}&  175.218&5.218& \textbf{7.842}\\
  
  & ResNet-18& 50  &  28.192&60.584& \textbf{63.673}&  254.218&19.584& \textbf{85.418}&  162.584&\textbf{6.842}& 6.518\\
\midrule
  \multirow{6}{*}{\shortstack[l]{Label\\Shift}}
  & EfficientNet-B0&  3  &  19.284&24.715& \textbf{40.143}&  248.572&11.284& \textbf{62.415}&  155.218&4.218& \textbf{5.842}\\
  
  & EfficientNet-B0& 10  &  21.538&26.192& \textbf{41.427}&  255.218&13.842& \textbf{58.842}&  162.842&3.518& \textbf{6.215}\\
  
  & EfficientNet-B0& 50  &  18.415&22.874& \textbf{33.218}&  238.415&9.518& \textbf{65.218}&  148.415&\textbf{4.842}& 4.518\\
  \cmidrule(lr){2-12}
  
  & ResNet-18&  3  &  23.142&29.458& \textbf{36.675}&  242.184&13.518& \textbf{72.842}&  148.584&5.218& \textbf{6.518}\\
  
  & ResNet-18& 10  &  20.875&27.318& \textbf{42.284}&  258.415&10.842& \textbf{68.215}&  155.215&4.415& \textbf{5.842}\\
  
  & ResNet-18& 50  &  22.418&25.192& \textbf{37.548}&  235.842&14.218& \textbf{64.518}&  142.842&\textbf{5.518}& 5.218\\
\midrule
  \multirow{6}{*}{\shortstack[l]{Cov.\\Shift}}
  & EfficientNet-B0&  3  &  14.582&\textbf{8.274}& 6.375&  215.842&8.418& \textbf{48.572}&  118.584&2.842& \textbf{4.518}\\
  
  & EfficientNet-B0& 10  &  15.218&\textbf{19.842}& 15.138&  228.415&10.215& \textbf{52.184}&  125.218&3.218& \textbf{3.842}\\
  
  & EfficientNet-B0& 50  &  12.874&16.528& \textbf{17.842}&  198.584&7.218& \textbf{44.815}&  108.415&2.218& \textbf{4.215}\\
  \cmidrule(lr){2-12}
  
  & ResNet-18&  3  &  18.415&\textbf{22.584}& 10.218&  222.518&9.842& \textbf{55.218}&  128.215&3.415& \textbf{3.518}\\
  
  & ResNet-18& 10  &  16.842&24.128& \textbf{35.584}&  218.184&11.518& \textbf{48.842}&  122.842&2.842& \textbf{4.584}\\
  
  & ResNet-18& 50  &  17.218&20.415& \textbf{31.472}&  208.415&8.184& \textbf{52.415}&  115.518&3.842& \textbf{5.218}\\
\bottomrule
\end{tabular}
\end{table*}

\section{Ablation Study: Additional Results} \label{sec:appendix_ablation_study}

The signal score $s$ is computed as $s = (\alpha \Delta \ell + \beta m_{\mathrm{cf}}) c^{\gamma}$, where $\Delta \ell$ measures the logit change after masking salient regions, $m_{\mathrm{cf}}$ quantifies the counterfactual margin, and $c$ represents the saliency concentration (see section \ref{sec:key_method}).
To assess the individual and combined contributions of the three components in our explainability-guided noise calibration signal, we conducted an ablation study across seven configurations of the component weights ($\alpha$, $\beta$, $\gamma$), where $\alpha$ weights the logit change, $\beta$ weights the counterfactual margin, and $\gamma$ scales the saliency concentration.
These configurations, summarized in Table~\ref{tab:signal_configs}, consist of isolated component tests (where one of the components is active during training, while the others are ignored), balanced baselines (where all components are activated), relative importance comparisons, and concentration scaling variations.
For each configuration, we report predictive performance (macro F1 score) and explainability quality (ROAD score) across all datasets.

\subsection{Predictive Performance}\label{sec:appendix_ablation_study_f1}

Figure~\ref{fig:f1_confs_ablation_analysis} presents the F1 scores across all configurations and noise levels for the Blood Cells and Chest X-ray datasets, respectively.
For the Blood Cells dataset (Figure~\ref{fig:bc_f1_conf_ablation}), F1 scores were consistent across configurations, particularly at higher privacy budgets ($\varepsilon \geq 5$), where all configurations achieved F1 scores between $0.80$ and $0.85$.
Differences become more apparent under stricter privacy constraints ($\varepsilon \leq 0.5$), where the balanced configuration ($\alpha=\beta=\gamma=1$) achieved the highest F1 score at $\varepsilon=0.5$ ($0.72$), outperforming both isolated component configurations and the concentration scaling variants.
The \textit{Counterfactual Margin-dominant} configuration ($\beta=2, \alpha=\gamma=1$) showed slightly reduced performance at moderate noise levels compared to the \textit{Balanced} baseline.

For the Chest X-ray dataset (Figure~\ref{fig:xr_f1_conf_ablation}), F1 scores followed a similar pattern, with all configurations achieving comparable performance at relaxed privacy levels ($\varepsilon \geq 5$), ranging between $0.55$ and $0.63$.
The \textit{Balanced} configuration achieved the highest F1 scores across most noise levels, reaching a score above $0.61$ at both $\varepsilon=5$ and $\varepsilon=50$.
Under stricter privacy ($\varepsilon=0.05$), the \textit{Balanced} configuration maintained competitive performance ($0.49$), while the \textit{Logit Change-dominant} configuration ($\alpha=2, \beta=\gamma=1$) showed the lowest F1 score ($0.43$), suggesting that overweighting the logit change component may harm predictive performance under high-noise conditions for this dataset.

These results indicate that predictive performance is robust to moderate variations in component weights across all three datasets, with the \textit{Balanced} configuration consistently performing well across all noise levels.
We attribute this robustness to the complementary nature of the three signal components: the logit change captures how much the model's prediction depends on salient regions, the counterfactual margin quantifies the decision boundary proximity, and the saliency concentration measures how focused the model's attention is.
When combined, these components provide a more comprehensive characterization of each sample's explainability quality, enabling finer-grained noise calibration that preserves predictive performance even under strict privacy constraints.
Relying on a single component, by contrast, captures only a partial view of the model's reasoning, leading to suboptimal noise allocation and reduced utility.

\begin{figure*}[ht]
    \centering
    \begin{subfigure}{1\linewidth}
        \centering
        \includegraphics[width=0.80\textwidth]{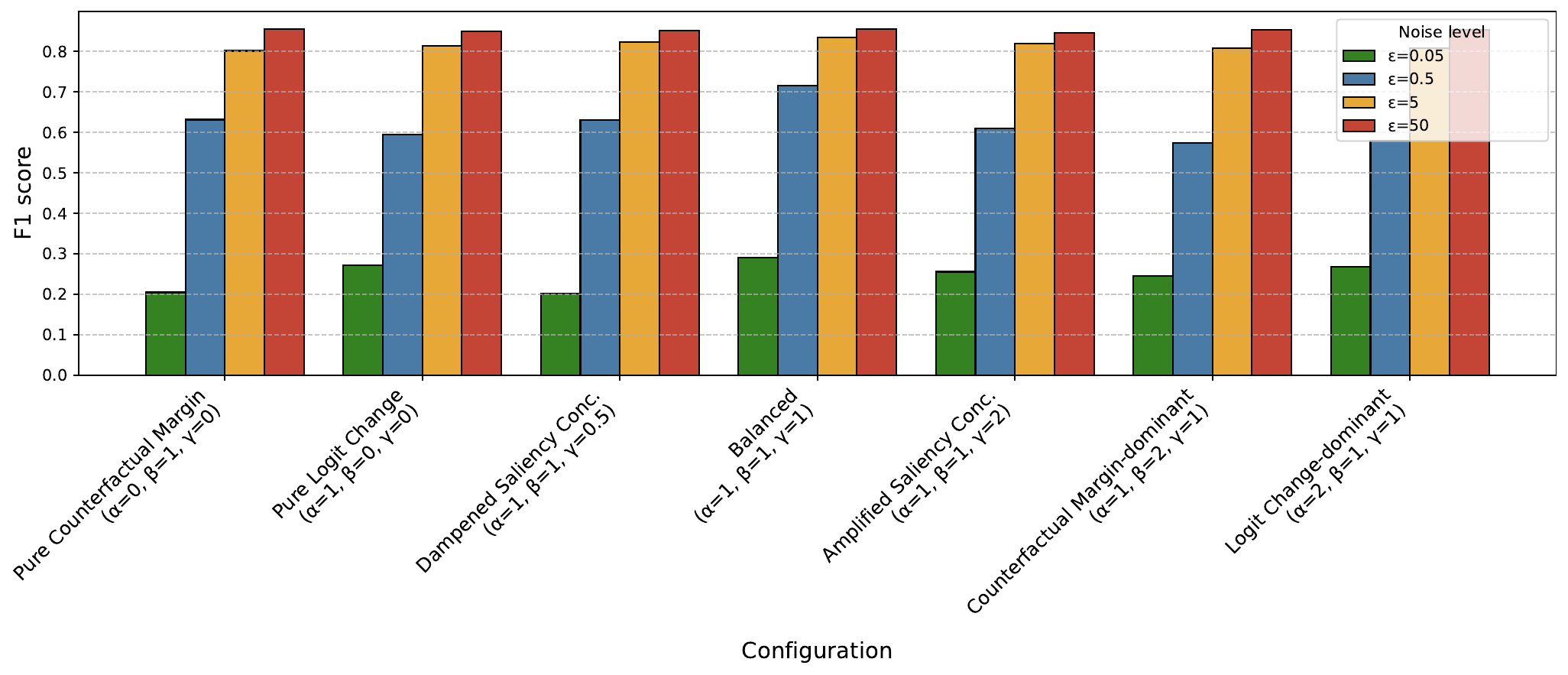}
        \subcaption{Blood cells}
        \label{fig:bc_f1_conf_ablation}
    \end{subfigure}
    \begin{subfigure}{1\linewidth}
    \centering
    \includegraphics[width=0.80\textwidth]{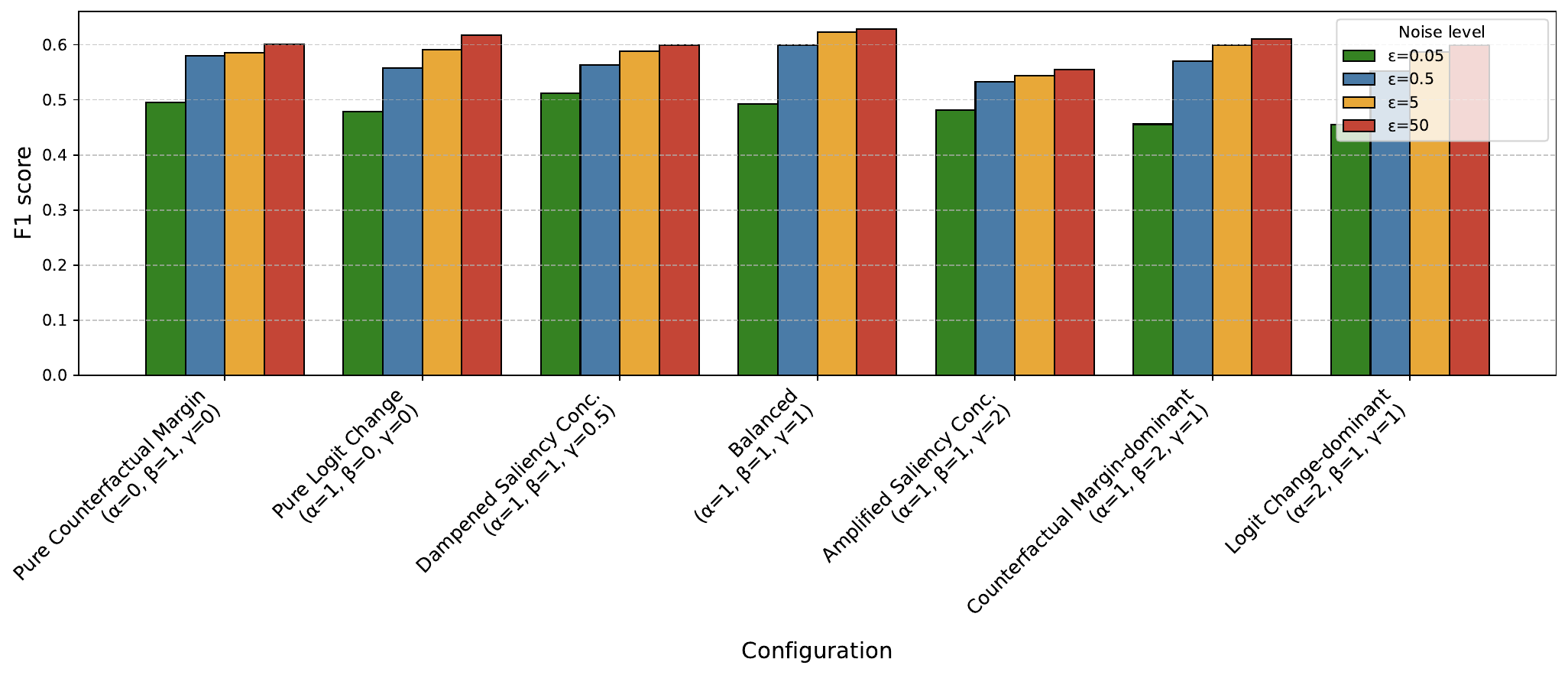}
    \subcaption{Chest X-rays}
    \label{fig:xr_f1_conf_ablation}
    \end{subfigure}
    \caption{Ablation analysis of signal component weights on predictive performance (F1 score) across noise levels ($\varepsilon$). Each configuration (Table \ref{tab:signal_configs}) varies the weights of the logit change ($\alpha$), counterfactual margin ($\beta$), and saliency concentration ($\gamma$) components.
    \label{fig:f1_confs_ablation_analysis}}
    \Description[]{}
\end{figure*}

\subsection{Explainability}\label{sec:appendix_ablation_study_expl}

As shown in Figure \ref{fig:expl_confs_ablation_analysis}, explainability across configurations reveals more pronounced differences than predictive performance, highlighting the importance of component selection for explainability quality.
For the Blood Cells dataset (Figure~\ref{fig:bc_expl_conf_ablation}), the \textit{Balanced} and \textit{Logit Change-dominant} configurations ($\alpha=2, \beta=\gamma=1$) achieved the highest ROAD scores, particularly at relaxed privacy levels ($\varepsilon \geq 5$), reaching a change in the model's confidence above $250\%$.
In contrast, the isolated component configurations (\textit{Pure Logit Change} with $\alpha=1, \beta=0, \gamma=0$, and \textit{Pure Counterfactual Margin} with $\alpha=0, \beta=1, \gamma=0$) yielded substantially lower ROAD scores across all noise levels, indicating that combining both components is beneficial for explainability.

For the Chest X-ray dataset (Figure~\ref{fig:xr_expl_conf_ablation}), the explainability results exhibited a different pattern compared to the other datasets.
The \textit{Dampened Saliency Concentration} configuration ($\alpha=\beta=1,\gamma=0.5$) achieved the highest ROAD scores across all noise levels, reaching approximately $29\%$ at $\varepsilon=50$ and $20\%$ at $\varepsilon=5$.
In contrast, the \textit{Balanced} configuration yielded lower ROAD scores (ranging from $6\%$ to $14\%$), underperforming several other configurations including the isolated component variants.
This suggests that for the Chest X-ray dataset, dampening the saliency concentration scaling allows the model to develop more diffuse but causally relevant attention patterns, which may be better suited to the grayscale nature and larger relevant regions characteristic of chest scans.
The \textit{Amplified Saliency Concentration} configuration ($\alpha=\beta=1,\gamma=2$) performed poorly, achieving the lowest ROAD scores across most noise levels, further indicating that aggressive concentration penalties are detrimental for this dataset.

The explainability results reveal that the optimal configuration may be dataset-dependent.
For the Blood Cells dataset, the \textit{Balanced} configuration provided the most consistent explainability improvements, as the three components offer complementary signals: logit change preserves learning from informative salient regions, counterfactual margin sharpens decision boundaries, and saliency concentration acts as a regularizer, favoring compact and interpretable attention patterns.
For the Chest X-ray dataset, however, dampened concentration scaling ($\gamma=0.5$) yielded superior explainability, likely because pneumonia-related features are more spatially distributed than the localized features in dermoscopic images.
Across all datasets, lower noise levels yielded more confident and focused models with stronger explainability fidelity.

\begin{figure*}[h]
    \centering
    \begin{subfigure}{1\linewidth}
        \centering
        \includegraphics[width=0.80\textwidth]{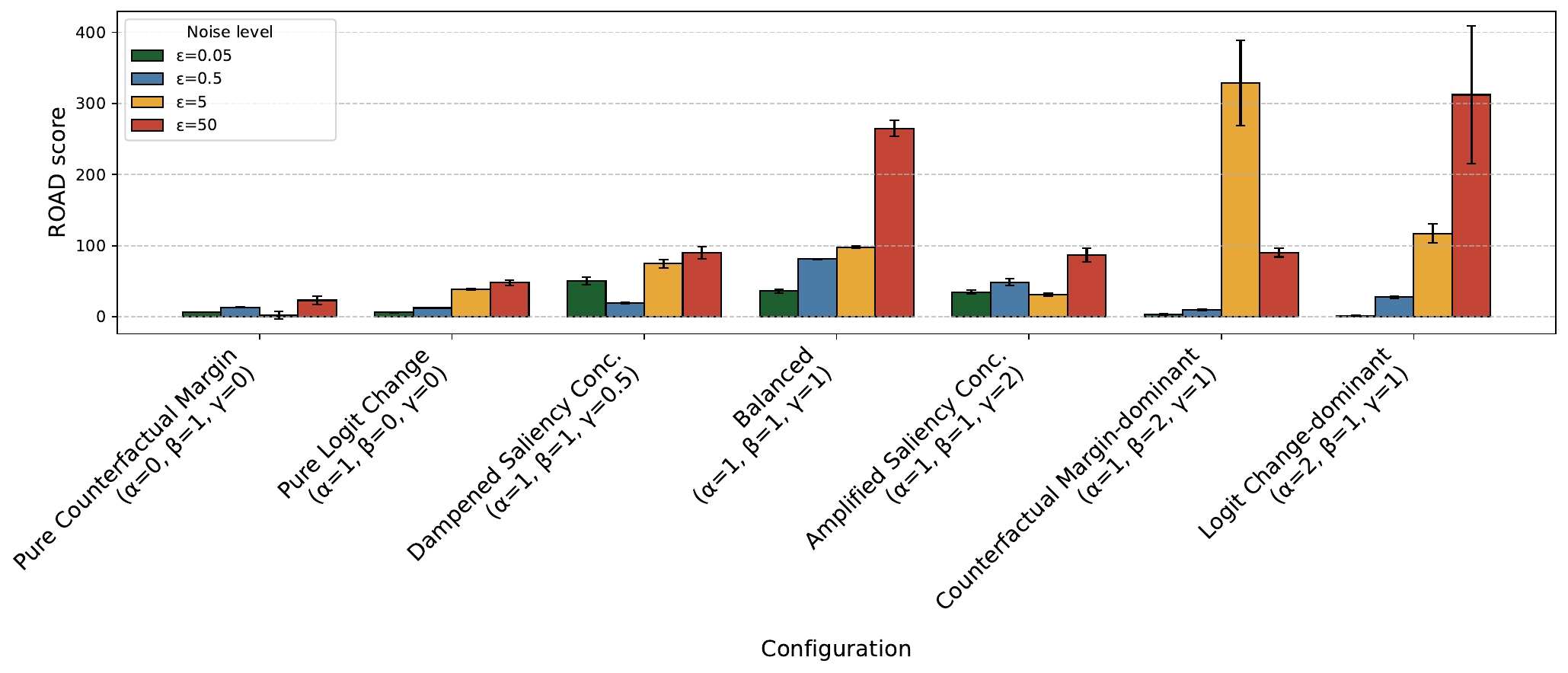}
        \subcaption{Blood cells}
        \label{fig:bc_expl_conf_ablation}
    \end{subfigure}
    \begin{subfigure}{1\linewidth}
        \centering
        \includegraphics[width=0.80\textwidth]{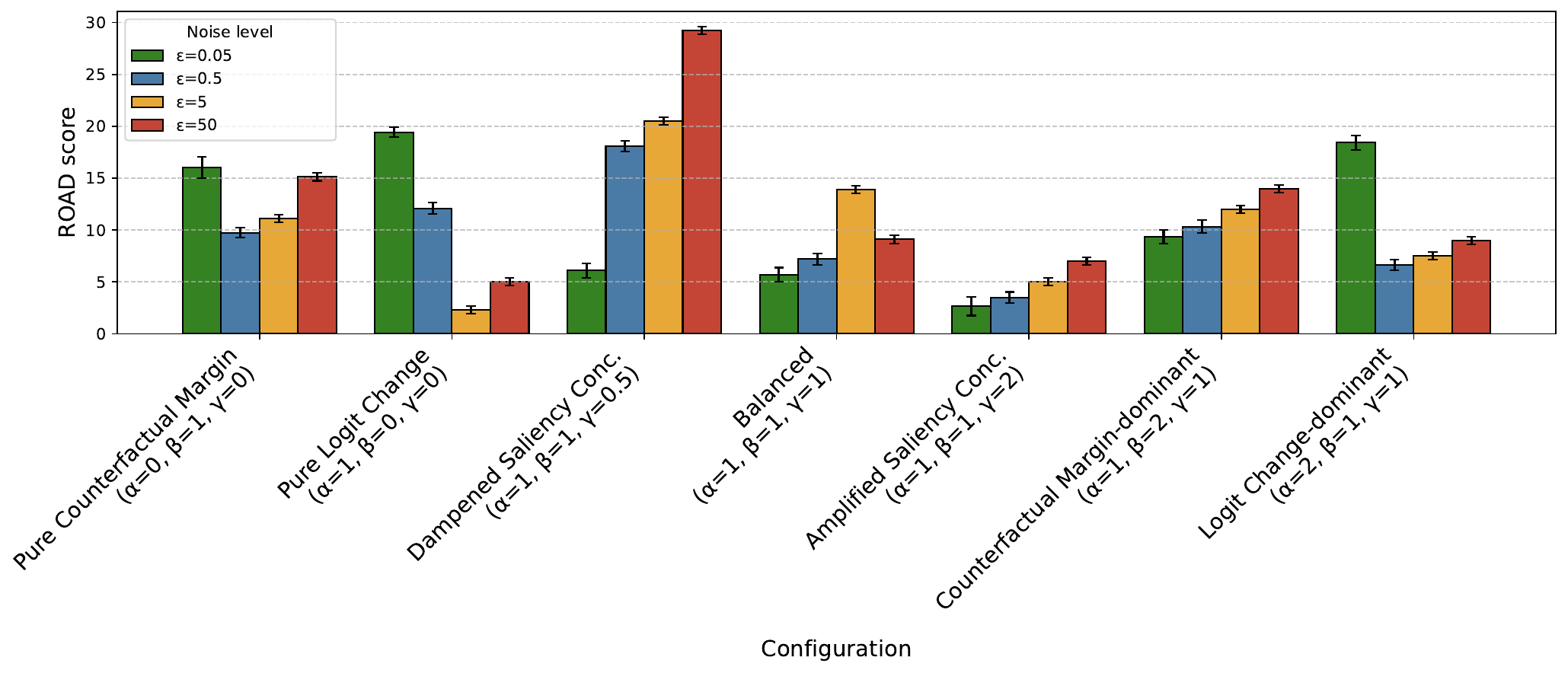}
        \subcaption{Chest X-rays}
        \label{fig:xr_expl_conf_ablation}
    \end{subfigure}
    \caption{Ablation analysis of signal component weights on explainability (ROAD score) across noise levels ($\varepsilon$). Each configuration (Table \ref{tab:signal_configs}) varies the weights of the logit change ($\alpha$), counterfactual margin ($\beta$), and saliency concentration ($\gamma$) components. Higher ROAD scores indicate stronger explainability fidelity.}
    \label{fig:expl_confs_ablation_analysis}
   \Description[]{}
\end{figure*}

\section{Detailed Complexity Analysis}\label{sec:appendix_time_complexity_analysis}
\begin{figure*}
    \centering
    \begin{subfigure}{0.31\linewidth}
        \centering
        \includegraphics[width=0.85\linewidth]{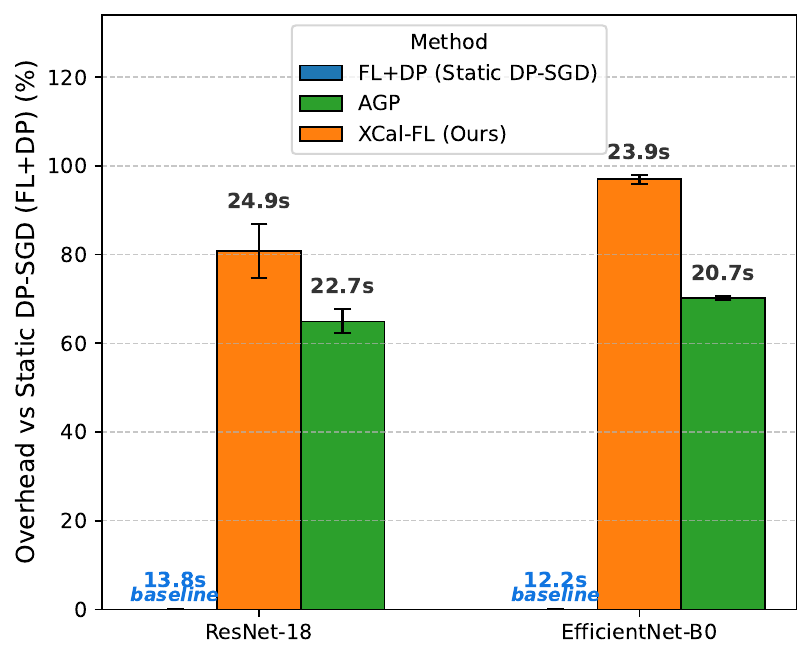}
        \subcaption{Blood cells}
        \label{fig:bc_time_complexity_analysis}
    \end{subfigure}%
    \begin{subfigure}{0.310\linewidth}
    \centering
    \includegraphics[width=0.85\linewidth]{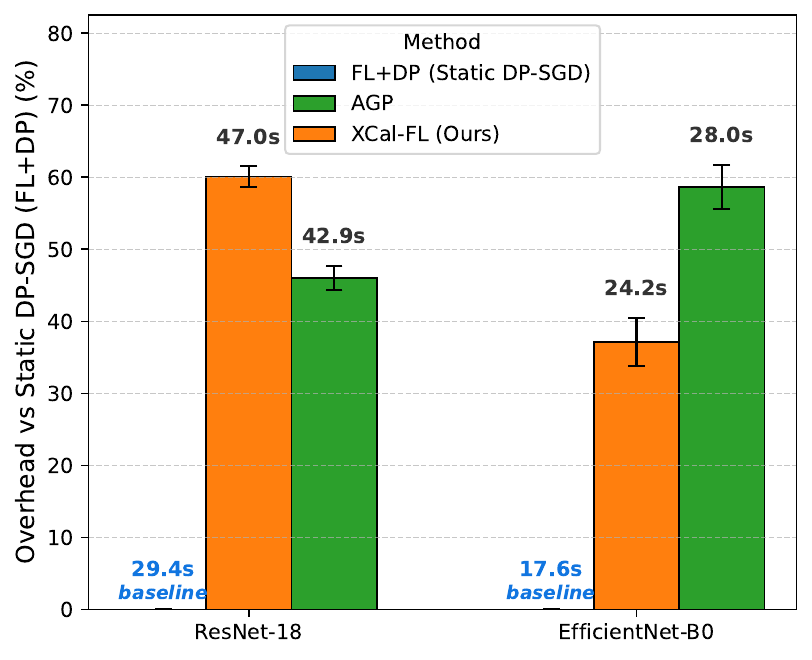}
    \subcaption{Chest X-rays}
    \label{fig:xr_time_complexity_analysis}
    \end{subfigure}%
    \begin{subfigure}{0.310\linewidth}
    \centering
    \includegraphics[width=0.85\linewidth]{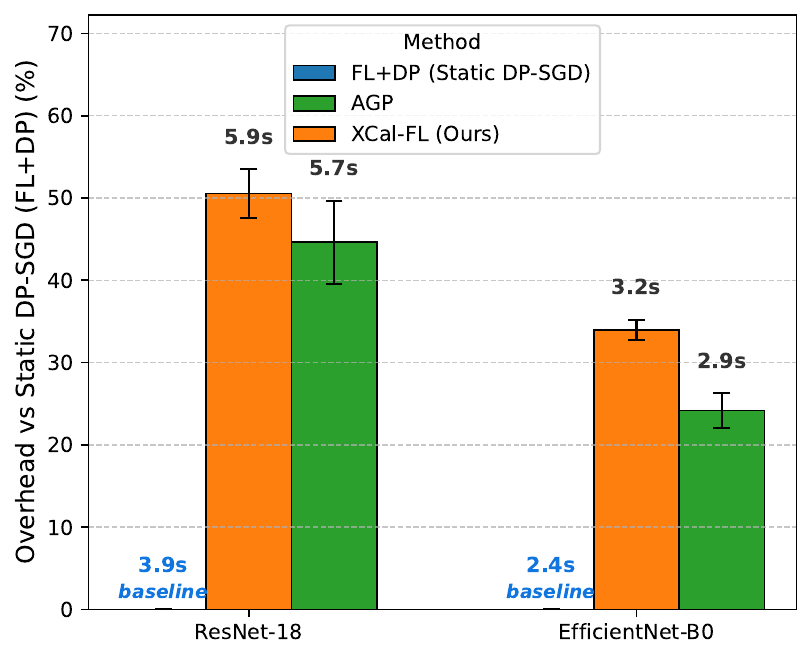}
    \subcaption{Melanoma}
    \label{fig:ml_time_complexity_analysis}
    \end{subfigure}
    \caption{Per-client training overhead (\%) relative to standard static DP-SGD, measured across three medical imaging datasets and two model architectures: ResNet-18 ($P{=}11.7$M) and EfficientNet-B0 ($P{=}5.3$M). Experiments were conducted on a NVIDIA's RTX 2080 Ti GPU machine. Wall-clock per-client times (seconds), averaged over three runs, are annotated above each bar; error bars denote the standard deviation across rounds.}
    \label{fig:time_complexity_analysis}
    \Description[]{}
\end{figure*}

We performed an execution runtime complexity analysis with respect to our used hardware (a machine with a 64GB of RAM and a NVIDIA's RTX 2080 Ti GPU) across all three datasets and our two evaluated model architectures. Specifically, we measured both the wall-clock duration of a single client training round and the incurred overhead relative to the standard client-level DP-SGD training (with static DP noise).

A standard DP-SGD training step costs approximately $3F + \mathcal{O}(P)$ per mini-batch, where $F$ is the cost of a single training-mode forward pass through the model and $P$ is the total number of model parameters: one forward pass contributes $F$, one backward pass contributes $\approx 2F$, and batch-level gradient clipping with noise injection adds $\mathcal{O}(P)$ \citep{abadi2016deeplearningwithdp,epoch2021backwardforwardFLOPratio}.
XCal-FL adds an explainability-driven signal computation per mini-batch, consisting of two inference-mode forward passes (one on the original input and one on the masked input), followed by saliency mask operations on the last convolutional feature map.
An inference-mode forward pass, $F_{\text{inf}}$, costs approximately $0.5$--$0.7 \times F$ since no computation graph is constructed and no intermediate activations are stored.
The per-step cost of XCal-FL is therefore $3F + 2 \cdot F_{\text{inf}} + \mathcal{O}(P)$. Notably, no additional backward pass is required for the explainability signal.
This yields a theoretical overhead of $2 \cdot F_{\text{inf}} / (3F + \mathcal{O}(P))$.

Figure~\ref{fig:time_complexity_analysis} reports the measured overhead across all configurations. 
XCal-FL incurs up to 80\% overhead over static DP-SGD depending on dataset and architecture, which falls within the range of the theoretical overhead bound and is consistent with previous evaluation studies with FL \citep{baumgart2024not}.
AGP~\cite{li2023balancing}, which computes channel-level importance masks via GradCAM (requiring one additional forward and backward pass per mini-batch), exhibits comparable overhead in most configurations.
The absolute per-round difference between XCal-FL and AGP is small, typically 1--3 seconds per client, while XCal-FL yields measurably higher model utility and interpretability fidelity (Sections~\ref{subsec:results_priv_utility_tradeoff} and~\ref{subsec:results_explainability}). 
Practically, this overhead is purely local and does not increase communication cost, which remains the dominant bottleneck in cross-silo federated settings where model transmission over communication links typically requires seconds to minutes per round.
Given the simultaneous gains in accuracy and explainability, this moderate computational overhead represents a favorable trade-off for privacy-sensitive applications, such as clinical imaging.

\end{document}